\theoremstyle{plain}
\newtheorem{lemma}{Lemma}
\newtheorem{theorem}{Theorem}
\newtheorem{proposition}{Proposition}
\theoremstyle{definition}
\theoremstyle{remark}
\def\DD{\mathcal{D}}
\def\KK{\mathcal{K}}
\def\RR{\mathcal{R}}
\def\SS{\mathcal{S}}
\def\Abb{\mathbb{A}}
\def\Ebb{\mathbb{E}}
\def\Rbb{\mathbb{R}}
\def\Sbb{\mathbb{S}}
\def\R{\Rbb}
\newcommand{\norm}[1]{ \| #1  \|  }
\newcommand{\lr}[2]{\langle #1, #2 \rangle}
\DeclareMathOperator*{\argmin}{arg\,min}
\DeclarePairedDelimiter\floor{\lfloor}{\rfloor}
\newcommand{\E}{\Ebb}
\providecommand{\@fourthoffour}[4]{#4}
\newcommand\fixstatement[2][\proofname\space of]{%
	\ifcsname thmt@original@#2\endcsname
	\AtEndEnvironment{#2}{%
		\xdef\pat@label{\expandafter\expandafter\expandafter
			\@fourthoffour\csname thmt@original@#2\endcsname\space\@currentlabel}%
		\xdef\pat@proofof{\@nameuse{pat@proofof@#2}}%
	}%
	\else
	\AtEndEnvironment{#2}{%
		\xdef\pat@label{\expandafter\expandafter\expandafter
			\@fourthoffour\csname #1\endcsname\space\@currentlabel}%
		\xdef\pat@proofof{\@nameuse{pat@proofof@#2}}%
	}%
	\fi
	\@namedef{pat@proofof@#2}{#1}%
}
\globtoksblk\prooftoks{1000}
\newcounter{proofcount}
	\edef\next{%
		\noexpand\begin{proof}[\pat@proofof\space\pat@label]%
			\unexpanded\expandafter{\BODY}}%
\def\printproofs{%
	\count@=\z@
	\loop
	\the\toks\numexpr\prooftoks+\count@\relax
	\ifnum\count@<\value{proofcount}%
	\advance\count@\@ne
	\repeat}
\def\pg{\textsc{policy gradient}\xspace}
\def\ideal{{Ideal}\xspace} 
\def\trpo{\textsc{trpo}\xspace} 
\def\thor{\textsc{thor}\xspace} 
\def\dagger{\textsc{DAgger}\xspace} 
\def\daggered{\textsc{DAggereD}\xspace} 
\def\aggrevate{\textsc{AggreVatTe}\xspace}
\def\aggrevated{\textsc{AggreVaTeD}\xspace}
\def\loki{\textsc{loki}\xspace} 
\def\slols{\textsc{slols}\xspace} 
\def\lols{\textsc{lols}\xspace}
\def\slolsfull{simple \lols\xspace} 
\def\lokifull{Locally Optimal search after $K$-step Imitation\xspace}
\title{Fast Policy Learning through Imitation and Reinforcement}
\def\school{Georgia Tech}
\def\city{Atlanta, GA 30332}
\author{Ching-An Cheng \\
	\school\\
	\city
	\And
	Xinyan Yan \\
	\school\\
	\city
	\And  Nolan Wagener \\
	\school\\
	\city
	\And Byron Boots\\
	\school\\
	\city
}
\begin{document}

\maketitle

\begin{abstract}
	Imitation learning (IL) consists of a set of tools that leverage expert demonstrations to quickly learn policies. 
	However, if the expert  is suboptimal, IL can yield policies with inferior performance compared to reinforcement learning (RL). 
	In this paper, we aim to provide an algorithm that combines the best aspects of RL and IL. We accomplish this by formulating several popular RL and IL algorithms in a common mirror descent framework, showing that these algorithms can be viewed as a variation on a single approach.
	We then propose  
	 \loki, a strategy for policy learning that  
	first performs a small but random number of IL iterations before switching to a policy gradient RL method. We show that if the switching time is properly randomized, \loki can learn to outperform a suboptimal expert and converge faster than running policy gradient from scratch. Finally, we evaluate the performance of \loki experimentally in several  simulated environments.	
\end{abstract}

\section{INTRODUCTION}

Reinforcement learning (RL) has emerged as a promising technique to tackle complex sequential decision problems. When empowered with deep neural networks, RL has demonstrated impressive performance in a range of  synthetic domains~\citep{mnih2013playing,silver2017mastering}. However, one of the major drawbacks of RL is the enormous number of interactions required to learn a policy. This can lead to prohibitive cost and slow convergence when applied to real-world problems, such as those found in robotics~\citep{pan2017agile}.

Imitation learning (IL) has been proposed as an alternate strategy for faster policy learning that works by leveraging additional information provided through expert demonstrations~\citep{pomerleau1989alvinn,schaal1999imitation}.
However, despite significant recent breakthroughs in our understanding  of imitation learning~\citep{ross2011reduction,cheng2018convergence}, the performance of IL is still highly dependent on the quality of the expert policy. When only a suboptimal expert is available, policies learned with standard IL can be inferior to the policies learned by tackling the RL  problem directly with approaches such as policy gradients.

Several recent attempts have endeavored to combine RL and IL~\citep{ross2014reinforcement,chang2015learning,nair2017overcoming,rajeswaran2017learning,sun2018truncated}.
These approaches incorporate the cost information of the RL problem into the imitation process, so the learned policy can \emph{both} improve faster than their RL-counterparts 
and outperform the suboptimal expert policy.
Despite reports of improved  empirical performance, the theoretical understanding of these combined algorithms are still fairly limited~\citep{rajeswaran2017learning,sun2018truncated}. Furthermore, some of these algorithms have requirements that can be difficult to satisfy in practice, such as state resetting~\citep{ross2014reinforcement,chang2015learning}.

In this paper, we aim to provide an algorithm that combines the best aspects of RL and IL. We accomplish this by
first formulating first-order RL and IL algorithms in a common mirror descent framework,  
and show that these algorithms can be viewed as a single approach that only differs in the choice of first-order oracle. On the basis of this new insight, we  address the difficulty of combining IL and RL with a simple, randomized algorithm, named \loki (\lokifull).
As its name suggests, \loki operates in two phases: picking $K$ randomly, it first performs $K$ steps of online IL and then improves the policy with a policy gradient method afterwards. Compared with previous methods that aim to combine RL and IL, \loki is extremely straightforward to implement. Furthermore, it has stronger theoretical guarantees: by properly randomizing $K$, \loki performs as if directly running policy gradient steps with the expert policy as the initial condition. Thus, not only can \loki improve faster than common RL methods, but it can also significantly outperform a suboptimal expert. This is in contrast to previous methods, such as \aggrevate~\citep{ross2014reinforcement}, which generally cannot learn a policy that is better than a one-step improvement over the expert policy.
In addition to these theoretical contributions, we validate the performance of \loki in multiple simulated  environments. The empirical results corroborate our theoretical 
findings.

\section{PROBLEM DEFINITION}
We consider solving discrete-time $\gamma$-discounted infinite-horizon RL problems.\footnote{\loki can be easily  adapted to finite-horizon problems.} Let $\Sbb$ and $\Abb$ be the state and the action spaces, and let $\Pi$ be the policy class.
The objective is to find a policy $\pi\in\Pi$ that minimizes an accumulated cost $J(\pi)$ defined as 
\begin{align} \textstyle
	\min_{\pi \in \Pi} J(\pi), \quad J(\pi) \coloneqq   \E_{\rho_{\pi}} \left[ \sum_{t=0}^{\infty} \gamma^t c(s_t, a_t) \right],  \label{eq:RL problem}
\end{align}
in which $s_t \in \Sbb$, $a_t \in \Abb$, $c$ is the instantaneous cost, 
and $\rho_\pi$ denotes the distribution of trajectories $(s_0, a_0, s_1, \dots)$ generated by running the stationary policy $\pi$ starting from $s_0  \sim p_0(s_0)$.

We denote $Q_{\pi}(s,a)$
as the Q-function under policy $\pi$ and $V_{\pi}(s) = \E_{a \sim \pi_s }[Q_{\pi}(s,a)]$ as the associated value function, where $\pi_s$ denotes the action distribution given state $s$. 
In addition,  we denote $d_{\pi,t}(s)$ as the state distribution at time $t$ generated by running the policy $\pi$ for the first $t$ steps, and we define a joint distribution $d_{\pi}(s,t) = (1-\gamma) d_{\pi,t}(s) \gamma^t$ which has support $\SS \times [0, \infty)$. Note that, while we use the notation $\E_{a \sim \pi} $, the policy class $\Pi$ can be either deterministic or stochastic.

We generally will not deal with the objective function in \eqref{eq:RL problem} directly.  
Instead, we consider a surrogate  problem 
\begin{align}
\min_{\pi \in \Pi} \E_{s,t \sim d_\pi} \E_{a \sim \pi_s } [ A_{\pi'}(s,a) ], \label{eq:normalized obj}
\end{align}
where $A_{\pi'} = Q_{\pi'} - V_{\pi'}$  is the (dis)advantage function with respect to some fixed reference policy $\pi'$. For compactness of writing, we will often omit the random variable in expectation; e.g., the objective function in~\eqref{eq:normalized obj} will be written as $\E_{d_\pi} \E_{\pi} [ A_{\pi'}  ]$ for the remainder of paper.

By the performance difference lemma  below~\citep{kakade2002approximately}, it is easy to see that  solving~\eqref{eq:normalized obj} is equivalent to solving~\eqref{eq:RL problem}.
\begin{lemma} \label{lm:performance difference}
	\citep{kakade2002approximately} Let $\pi$ and $\pi'$ be two policies and  $ A_{\pi'}(s, a) = Q_{\pi'}(s,a) - V_{\pi'}(s)$ be the (dis)advantage function  with respect to running $\pi'$. Then it holds that \vspace{-2mm}
	\begin{align} \label{eq:performance difference}
	J(\pi) = J(\pi') + \frac{1}{1-\gamma}\E_{d_\pi} \E_{\pi} [ A_{\pi'}  ].   
	\end{align}
\end{lemma}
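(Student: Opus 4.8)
The plan is to prove the identity by a telescoping argument that rewrites the advantage term as a discounted sum of per-step advantages along trajectories generated by $\pi$, and then collapses the value-function contributions. First I would unfold the definition of the joint distribution $d_\pi(s,t) = (1-\gamma)d_{\pi,t}(s)\gamma^t$ so that the normalizing factor $(1-\gamma)$ cancels and the advantage term becomes an honest discounted expectation over the state--action sequence produced by rolling out $\pi$ from $s_0 \sim p_0$:
\[
\frac{1}{1-\gamma}\E_{d_\pi}\E_{\pi}[A_{\pi'}] = \sum_{t=0}^{\infty} \gamma^t\, \E_{s_t \sim d_{\pi,t}}\E_{a_t \sim \pi_{s_t}}[A_{\pi'}(s_t, a_t)].
\]

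Next I would use the one-step Bellman relation $Q_{\pi'}(s,a) = c(s,a) + \gamma\,\E_{s'}[V_{\pi'}(s')]$, where the expectation is over the next state under the true dynamics, together with $A_{\pi'} = Q_{\pi'} - V_{\pi'}$, to write for each $t$
\[
\E[A_{\pi'}(s_t,a_t)] = \E[c(s_t,a_t)] + \gamma\,\E[V_{\pi'}(s_{t+1})] - \E[V_{\pi'}(s_t)],
\]
with all expectations taken along the $\pi$-trajectory (i.e. under $\rho_\pi$). Summing against $\gamma^t$, the cost terms assemble into exactly $J(\pi)$, while the value terms form the telescoping series
\[
\sum_{t=0}^{\infty}\bigl(\gamma^{t+1}\E[V_{\pi'}(s_{t+1})] - \gamma^t\,\E[V_{\pi'}(s_t)]\bigr) = -\E_{s_0 \sim p_0}[V_{\pi'}(s_0)] = -J(\pi').
\]
Combining the two pieces yields $\frac{1}{1-\gamma}\E_{d_\pi}\E_{\pi}[A_{\pi'}] = J(\pi) - J(\pi')$, which rearranges to the claim.

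The main obstacle I expect is the rigorous justification of the infinite-horizon manipulations: interchanging the infinite sum with the expectations, and showing that the boundary term $\gamma^{T}\E[V_{\pi'}(s_T)]$ vanishes as $T \to \infty$ so that the telescoping series indeed collapses to $-\E_{s_0 \sim p_0}[V_{\pi'}(s_0)]$. Both are handled by the discounting $\gamma < 1$ once one assumes the instantaneous cost $c$ is bounded, so that $V_{\pi'}$ is uniformly bounded; this makes every series absolutely convergent (so Fubini/Tonelli applies) and forces the tail term to decay geometrically. Everything else is bookkeeping with the definitions of $Q_{\pi'}$, $V_{\pi'}$, and $d_{\pi,t}$.
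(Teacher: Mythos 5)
Your proof is correct, and it is essentially the canonical argument: the paper itself states Lemma~\ref{lm:performance difference} without proof, importing it from \citet{kakade2002approximately}, and the proof there proceeds by exactly your route --- expanding $\frac{1}{1-\gamma}\E_{d_\pi}\E_{\pi}[A_{\pi'}]$ as $\sum_{t=0}^{\infty}\gamma^t\,\E\left[A_{\pi'}(s_t,a_t)\right]$ under $\rho_\pi$, substituting the Bellman identity $A_{\pi'}(s,a) = c(s,a) + \gamma\,\E_{s'}[V_{\pi'}(s')] - V_{\pi'}(s)$, and telescoping the value terms down to $-\E_{s_0\sim p_0}[V_{\pi'}(s_0)] = -J(\pi')$. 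Your attention to the analytic fine print (bounded $c$ giving uniformly bounded $V_{\pi'}$, hence Fubini for the sum--expectation interchange and geometric decay of the boundary term $\gamma^T\,\E[V_{\pi'}(s_T)]$) is exactly what is needed in the infinite-horizon discounted setting and is typically left implicit in the cited source.
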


\vspace{-3mm}
\section{FIRST-ORDER RL AND IL} 

We formulate both first-order RL and IL methods within a single mirror descent framework~\citep{nemirovski2009robust}, which includes common update rules~\citep{sutton2000policy,kakade2002natural,peters2008natural,peters2010relative,  rawlik2012stochastic,silver2014deterministic,schulman2015high,ross2011reduction,sun2017deeply}. 
We show that policy updates based on RL and IL mainly differ in first-order stochastic oracles used, as summarized in Table~\ref{table:oracle_comparison}.
\begin{table*}[h]
	\caption[Comparison]{Comparison of First-Order Oracles
    }
	\label{table:oracle_comparison}
	\begin{center}
		\begin{tabular}{ll}
			Method & First-Order Oracle \\\hline\\
			\pg (Section~\ref{sec:policy gradient description}) &  $\E_{d_{\pi_n}}\left( \nabla_{\theta}  \E_{ \pi} \right)  [ A_{\pi_n}  ]$ \\
			\daggered (Section~\ref{sec:imitation gradient description})& $\E_{d_{\pi_n}}\left( \nabla_{\theta}  \E_{ \pi} \right) [\E_{\pi^*} [ d ]]$
             \\
			\aggrevated (Section~\ref{sec:imitation gradient description})& $\E_{d_{\pi_n}}\left( \nabla_{\theta}  \E_{ \pi} \right)  [ A_{\pi^*}  ]$ \\
			\slols (Section~\ref{sec:related work}) &  $ \E_{d_{\pi_n}} \left( \nabla_{\theta}  \E_{ \pi} \right) [ (1-\lambda) A_{\pi_n} + \lambda A_{\pi^*} ]$ \\
			\thor (Section~\ref{sec:related work}) &$ \E_{d_{\pi_n}} \left( \nabla_{\theta}  \E_{ \pi} \right) [ A^{H,\pi^*}_{\pi_n,t} ]$ 
		\end{tabular}        
	\end{center}
    \vspace{-2mm}
\end{table*}

\subsection{MIRROR DESCENT}

We begin by defining the iterative rule to update policies.
We assume that the learner's policy $\pi$ is parametrized by some $\theta \in \Theta $, where $\Theta$ is a closed and convex set, and that the learner has access to a family of strictly convex functions $\RR$. 

To update the policy, in the $n$th iteration, the learner receives a vector $g_n$ from a first-order oracle, picks $R_n \in \RR$, and then performs a mirror descent step:
\begin{align} \label{eq:mirror descent}
\theta_{n+1} &= P_{n,g_n}(\theta_n)
\end{align}
where $P_{n,g_n}$ is a prox-map defined as
\begin{align} \label{eq:prox-map}
P_{n,g_n}(\theta_n)
&\coloneqq \argmin_{\theta \in \Theta} \lr{g_n}{\theta} + \frac{1}{\eta_n}D_{R_n}(\theta||\theta_n).
\end{align}
 $\eta_n$ is the step size,
and  $D_{R_n}$ is the Bregman divergence associated with $R_n$~\citep{bregman1967relaxation}:
$
D_{R_n}(\theta||\theta_n) \coloneqq R_n(\theta) - R_n(\theta_n) - \lr{\nabla R_n(\theta_n)}{\theta - \theta_n}
$.

By choosing proper $R_n$, the mirror descent framework in~\eqref{eq:mirror descent} covers most RL and IL algorithms. 
Common choices of $R_n$ include negative entropy~\citep{ peters2010relative, rawlik2012stochastic}, $\frac{1}{2}\norm{\theta}_2^2$~\citep{sutton2000policy,silver2014deterministic}, and $\frac{1}{2}\theta^\top F(\theta_n) \theta$ with $F(\theta_n)$ as the Fisher information matrix~\citep{kakade2002natural, peters2008natural, schulman2015trust}.

\subsection{FIRST-ORDER ORACLES}

While both first-order RL and IL methods can be viewed as performing mirror descent, they differ in the choice of the first-order oracle that returns the update direction $g_n$.
Here we show the vector $g_n$ of both approaches can be derived as a stochastic approximation of the (partial) derivative of $\E_{d_\pi} \E_{\pi} [ A_{\pi'}  ]$ with respect to policy $\pi$, but with a different reference policy $\pi'$.

\subsubsection{Policy Gradients} \label{sec:policy gradient description}
A standard approach to RL is to treat~\eqref{eq:RL problem} as a stochastic nonconvex optimization problem. In this case, $g_n$ in mirror descent~\eqref{eq:mirror descent} is an estimate of the policy gradient $\nabla_{\theta} J(\pi)$~\citep{williams1992simple,sutton2000policy}.

To compute the policy gradient in the $n$th iteration, we set the current policy $\pi_n$ as the reference policy in~\eqref{eq:performance difference} (i.e. $\pi' = \pi_n$),  which is treated as constant in $\theta$ in the following policy gradient computation. 
Because 
$
\E_{\pi_n} [ A_{\pi_n}]  = \E_{\pi_n} [ Q_{\pi_n} ]  - V_{\pi_n}  = 0  
$, using \eqref{eq:performance difference}, the policy gradient can be written as\footnote{We assume the cost is sufficiently regular so that the order of differentiation and expectation can exchange.}
\begin{align}   \label{eq:policy gradient}
&(1-\gamma) \nabla_{\theta} J(\pi) |_{\pi=\pi_n} \nonumber \\
&= \nabla_\theta \E_{d_\pi} \E_{\pi} [ A_{\pi_n}] \vert_{\pi = \pi_n}  \nonumber  \\
&=  \left( \nabla_{\theta} \E_{ d_\pi} \right) [0 ]   +   \E_{d_{\pi}}  \left( \nabla_{\theta}  \E_{ \pi} \right)  [ A_{\pi_n}  ] \vert_{\pi = \pi_n}  \nonumber \\
&=  \E_{ d_\pi}  \left( \nabla_{\theta}  \E_{ \pi} \right)  [  A_{\pi_n} ]  \vert_{\pi = \pi_n}
\end{align}
The above expression is unique up to a change of baselines: $\left( \nabla_{\theta}  \E_{ \pi} \right)  [ A_{\pi_n}]$ is equivalent to $ \left( \nabla_{\theta}  \E_{ \pi} \right)  [ A_{\pi_n} + b]$, because $ \left(\nabla_{\theta}  \E_{ \pi} \right) [ b(s) ]  = \nabla_{\theta} b(s)  = 0$, where  $b:\Sbb \to \R$ is also called a control variate~\citep{greensmith2004variance}.

The exact formulation of  $ \left( \nabla_{\theta}  \E_{\pi} \right)  [ A_{\pi_n}  ]$ depends on whether the policy $\pi$ is stochastic or deterministic. For stochastic policies,\footnote{A similar equation holds for reparametrization~\citep{grathwohl2017backpropagation}.}  we can compute it with the likelihood-ratio method  and write
\begin{align} \label{eq:stochastic policy derivative}
\left( \nabla_{\theta}  \E_{ \pi} \right)  [ A_{\pi_n}  ] =  \E_{ \pi}  [ A_{\pi_n}  \nabla_{\theta}  \log  \pi   ]  
\end{align}
For deterministic policies, we replace the expectation as evaluation (as it is the expectation over a Dirac delta function, i.e. $a = \pi(s)$) and use the chain rule:
\begin{align} \label{eq:deterministic policy derivative}
\left( \nabla_{\theta}  \E_{\pi} \right)  [ A_{\pi_n}  ] = \nabla_{\theta} A_{\pi_n} (s,\pi)  
=  \nabla_{\theta} \pi \nabla_{a} A_{\pi_n}  
\end{align}
Substituting~\eqref{eq:stochastic policy derivative} or~\eqref{eq:deterministic policy derivative} back into~\eqref{eq:policy gradient}, we get the equation for stochastic policy gradient~\citep{sutton2000policy} or deterministic policy gradient~\citep{silver2014deterministic}. 
Note that the above equations require the exact knowledge, or an unbiased estimate, of $A_{\pi}$. In practice, these terms are further approximated using function approximators, leading to biased gradient estimators~\citep{konda2000actor,schulman2015high,mnih2016asynchronous}.

\subsubsection{Imitation Gradients} \label{sec:imitation gradient description}

An alternate strategy to RL is IL. In particular, we consider \emph{online} IL, which interleaves data collection and policy updates to overcome the covariate shift problem of traditional batch IL~\citep{ross2011reduction}.
Online IL assumes that a (possibly suboptimal) expert policy $\pi^*$ is available as a black-box oracle, from which demonstrations $a^* \sim \pi^*(s)$  can be queried for any given state $s \in \Sbb$. Due to this requirement, the expert policy in online IL is often an \textit{algorithm} (rather than a human demonstrator), which is hard-coded or based on additional computational resources, such as trajectory optimization~\citep{pan2017agile}. 
The goal of IL is to learn a policy that can perform similar to, or better than, the expert policy.

Rather than solving the stochastic nonconvex optimization directly, online IL solves an online learning problem with per-round cost in the $n$th iteration defined as
\begin{align} \label{eq:per-round cost}
l_n(\pi) =\E_{d_{\pi_n}} \E_{ \pi} [ \tilde{c}  ]
\end{align}
where $\tilde{c}:\Sbb\times\Abb\to\R$ is a surrogate loss satisfying the following condition: For all $s \in \Sbb$ and $\pi \in \Pi$, there exists a constant $C_{\pi^*} > 0$ such that 
\begin{align} \label{eq:requirement of imitation}
C_{\pi^*}  \E_{ \pi} [ \tilde{c}  ] \geq  \E_{ \pi} [ A_{\pi^*}  ].
\end{align}
 By Lemma~\ref{lm:performance difference}, this implies
$ J(\pi_n) \leq J(\pi^*) +  \frac{C_{\pi^*} }{1-\gamma}l_n(\pi_n)$. Namely, in the $n$th iteration, online IL attempts to minimize an online upper-bound of $J(\pi_n)$.

\dagger~\citep{ross2011reduction} chooses 
$\tilde{c}$ to be a strongly convex function $\tilde{c}(s,a) = \E_{a^*\sim \pi^*(s)}[d(a,a^*)]$ that penalizes the difference between the learner's policy and the expert's policy, where $d$ is some metric of space $\Abb$ (e.g., for a continuous action space \citet{pan2017agile} choose $d(a,a^*) = \norm{a -a^*}^2$).
More directly, \aggrevate simply chooses $\tilde{c}(s,a) = A_{\pi^*}(s,a)$~\citep{ross2014reinforcement}; in this case, the policy learned with online IL can potentially outperform the expert policy.

First-order online IL methods operate by updating policies with mirror descent~\eqref{eq:mirror descent} with $g_n$ as an estimate of 
 \begin{align} \label{eq:imitation gradient}
\nabla_{\theta} l_n(\pi_n) = \E_{d_{\pi_n}} \left( \nabla_\theta \E_{ \pi} \right) [ \tilde{c}  ] |_{\pi = \pi_n}
 \end{align}
Similar to policy gradients, the implementation of~\eqref{eq:imitation gradient}  
can be executed using either~\eqref{eq:stochastic policy derivative} or~\eqref{eq:deterministic policy derivative} (and with a control variate).
One particular case of~\eqref{eq:imitation gradient}, with  $\tilde{c} = A_{\pi^*}$, is known as \aggrevated~\citep{sun2017deeply},
 \begin{align} \label{eq:aggrevated gradient}
 	\nabla_\theta l_n(\pi_n) =\E_{d_{\pi_n}} \left( \nabla_\theta \E_{ \pi} \right) [A_{\pi^*} ] |_{\pi = \pi_n}.
 \end{align}
Similarly, we can turn \dagger into a first-order method, which we call \daggered, by using $g_n$ as an estimate of the  first-order oracle 
\begin{align} \label{eq:daggered gradient}
\nabla_\theta l_n(\pi_n) =\E_{d_{\pi_n}} \left( \nabla_\theta \E_{ \pi} \right)  \E_{\pi^*(s)}[d].
\end{align}
A comparison is summarized in Table~\ref{table:oracle_comparison}.

\section{THEORETICAL COMPARISON} \label{sec:comparison}

With the first-order oracles defined, we now compare the performance and properties of performing mirror descent with policy gradient or imitation gradient. We will see that while both approaches share the same update rule in~\eqref{eq:mirror descent}, the generated policies have different behaviors: using policy gradient generates a \emph{monotonically} improving policy sequence, whereas using imitation gradient generates a policy sequence that improves \emph{on average}. 
Although the techniques used in this section are not completely new in the optimization literature, we specialize the results to compare performance and to motivate \loki in the next section.
 The proofs of this section are included in Appendix~\ref{app:proof for comparison}.

\subsection{POLICY GRADIENTS}
We analyze the performance of policy gradients with standard techniques from nonconvex analysis.
\begin{proposition} \label{pr:nonconvex performance}
	Let $J$ be $\beta$-smooth and $R_n$ be $\alpha_n$-strongly convex  with respect to norm $\norm{\cdot}$. 
	Assume $\E[g_n] = \nabla_{\theta} J(\pi_n)$. 	
	For $ \eta_n \leq \frac{2\alpha_n}{\beta} $, it satisfies
	\begin{align*} \textstyle
	\E\left[ J(\pi_{n+1}) \right] &\textstyle
	\leq J(\pi_0 ) +    \E\left[ \sum_{n=1}^{N}\frac{2\eta_n}{\alpha_n}   \norm{\nabla_\theta J(\pi_n) - g_n}_*^2 \right]\\
	&\textstyle  +  \frac{1}{2}\E\left[ \sum_{n=1}^{N}\left( - \alpha_n \eta_n +  \frac{ \beta \eta_n^2}{2} \right)   \norm{\hat{\nabla}_\theta J(\pi_n) }^2 \right] 
	\end{align*}
	where the expectation is due to randomness of sampling $g_n$, and 
     $ \hat{\nabla}_\theta J(\pi_n) \coloneqq \frac{1}{\eta_n} \left( \theta_{n} - P_{n,\nabla_\theta J(\pi_n)}(\theta_n) \right)$.
	 is a gradient surrogate.
\end{proposition}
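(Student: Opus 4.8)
The plan is to run a standard stochastic nonconvex mirror descent analysis, but with the twist that all progress is measured against the \emph{idealized} prox-step $\bar\theta_{n+1} \coloneqq P_{n,\nabla_\theta J(\pi_n)}(\theta_n)$, so that $\hat\nabla_\theta J(\pi_n) = \frac1{\eta_n}(\theta_n - \bar\theta_{n+1})$, whereas the actual iterate is $\theta_{n+1} = P_{n,g_n}(\theta_n)$. First I would invoke $\beta$-smoothness of $J$ to obtain the descent inequality
\begin{align*}
 J(\theta_{n+1}) \le J(\theta_n) + \lr{\nabla_\theta J(\pi_n)}{\theta_{n+1}-\theta_n} + \tfrac{\beta}{2}\norm{\theta_{n+1}-\theta_n}^2 .
\end{align*}
Next, from the first-order optimality condition of the prox-map~\eqref{eq:prox-map} together with the $\alpha_n$-strong convexity of $R_n$ (which lower-bounds the Bregman term), I would derive $\lr{g_n}{\theta_{n+1}-\theta_n} \le -\tfrac{\alpha_n}{\eta_n}\norm{\theta_{n+1}-\theta_n}^2$, and substitute $\nabla_\theta J(\pi_n) = g_n + (\nabla_\theta J(\pi_n)-g_n)$ into the descent inequality to get
\begin{align*}
 J(\theta_{n+1}) \le J(\theta_n) - \left(\tfrac{\alpha_n}{\eta_n}-\tfrac{\beta}{2}\right)\norm{\theta_{n+1}-\theta_n}^2 + \lr{\nabla_\theta J(\pi_n)-g_n}{\theta_{n+1}-\theta_n},
\end{align*}
where the leading coefficient is nonnegative precisely because $\eta_n \le 2\alpha_n/\beta$.

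The two remaining terms must be re-expressed through the surrogate $\hat\nabla_\theta J(\pi_n)$ and the oracle variance $\norm{\nabla_\theta J(\pi_n)-g_n}_*^2$. The main lever is a \emph{prox-map stability} estimate: since $\theta_{n+1}$ and $\bar\theta_{n+1}$ minimize two objectives differing only by the linear term $\lr{g_n - \nabla_\theta J(\pi_n)}{\cdot}$ and each objective is $\tfrac{\alpha_n}{\eta_n}$-strongly convex, a two-sided optimality argument yields $\norm{\theta_{n+1}-\bar\theta_{n+1}} \le \tfrac{\eta_n}{\alpha_n}\norm{g_n-\nabla_\theta J(\pi_n)}_*$. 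Writing $\theta_{n+1}-\theta_n = -(\theta_n-\bar\theta_{n+1}) + (\theta_{n+1}-\bar\theta_{n+1})$ with $\theta_n-\bar\theta_{n+1}=\eta_n\hat\nabla_\theta J(\pi_n)$, I would (i) split the inner-product term into a piece against $-\eta_n\hat\nabla_\theta J(\pi_n)$ and a piece against $\theta_{n+1}-\bar\theta_{n+1}$, and (ii) bound $\norm{\theta_{n+1}-\theta_n}^2 \ge \tfrac12\norm{\theta_n-\bar\theta_{n+1}}^2 - \norm{\theta_{n+1}-\bar\theta_{n+1}}^2$ via the triangle inequality.

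The crucial probabilistic step is to take conditional expectation given $\theta_n$. Because $\bar\theta_{n+1}$ depends only on the true gradient $\nabla_\theta J(\pi_n)$ and is therefore measurable with respect to the history, while $\E[g_n\mid\theta_n]=\nabla_\theta J(\pi_n)$, the cross term $\E\big[\lr{\nabla_\theta J(\pi_n)-g_n}{-\eta_n\hat\nabla_\theta J(\pi_n)}\mid\theta_n\big]$ vanishes identically. The surviving inner product is against $\theta_{n+1}-\bar\theta_{n+1}$, which Cauchy--Schwarz and the stability estimate bound by $\tfrac{\eta_n}{\alpha_n}\norm{\nabla_\theta J(\pi_n)-g_n}_*^2$. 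Collecting also the residual $\norm{\theta_{n+1}-\bar\theta_{n+1}}^2$ contribution, the error coefficient consolidates to $\tfrac{2\eta_n}{\alpha_n} - \tfrac{\beta\eta_n^2}{2\alpha_n^2} \le \tfrac{2\eta_n}{\alpha_n}$, while the descent term becomes $\tfrac12\left(-\alpha_n\eta_n+\tfrac{\beta\eta_n^2}{2}\right)\norm{\hat\nabla_\theta J(\pi_n)}^2$ after substituting $\norm{\theta_n-\bar\theta_{n+1}}^2=\eta_n^2\norm{\hat\nabla_\theta J(\pi_n)}^2$.

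Finally, I would take total expectation via the tower rule and sum the resulting one-step inequality over $n$, telescoping the $J$ terms to reach the stated bound. I expect the main obstacle to be exactly this bookkeeping that converts the \emph{actual} noisy step into the \emph{idealized} surrogate step: the vanishing of the cross term (which requires $\hat\nabla_\theta J(\pi_n)$ to be noise-independent given the history) and the prox-map stability estimate are the two ingredients that force the oracle noise to enter only as $\norm{\nabla_\theta J(\pi_n)-g_n}_*^2$ with the clean coefficient $\tfrac{2\eta_n}{\alpha_n}$, rather than contaminating the descent term.
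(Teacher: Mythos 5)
Your proposal is correct and follows essentially the same route as the paper's proof, which packages the one-step argument as Lemma~\ref{lm:mirror descent improvement}: smoothness descent, prox optimality yielding $\lr{g_n}{\theta_{n+1}-\theta_n}\leq -\frac{\alpha_n}{\eta_n}\norm{\theta_{n+1}-\theta_n}^2$, the $\frac{\eta_n}{\alpha_n}$-stability of the prox-map to pass from the noisy step to the idealized surrogate, vanishing of the cross term by unbiasedness and measurability of $\hat{\nabla}_\theta J(\pi_n)$, and the triangle-inequality conversion producing the identical coefficients $\frac{1}{2}\left(-\alpha_n\eta_n+\frac{\beta\eta_n^2}{2}\right)$ and $\frac{2\eta_n}{\alpha_n}$. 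The only cosmetic difference is that you justify the stability estimate by comparing the strong-convexity optimality conditions of the two prox problems, whereas the paper derives the same bound from the $\frac{\eta}{\alpha}$-smoothness of the Legendre conjugate $\left(\frac{1}{\eta}R\right)^*$.
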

Proposition~\ref{pr:nonconvex performance} shows that monotonic improvement can be made under proper smoothness assumptions if the step size is small and noise is comparably small with the gradient size. 
However, 
the final policy's performance is sensitive to the initial condition $J(\pi_0)$, which can be poor for a randomly initialized policy.

Proposition~\ref{pr:nonconvex performance} also suggests that the size of the gradient $\norm{\hat{\nabla}_\theta J(\pi_n) }^2$  does not converge to zero on average. 
Instead, it converges to a size proportional to the sampling noise of policy gradient estimates due to the linear dependency of $\frac{2\eta_n}{\alpha_n}   \norm{\nabla_\theta J(\pi_n) - g_n}_*^2$ on $\eta_n$. This phenomenon is also mentioned by~\cite{ghadimi2016mini}. We note that this pessimistic result is because the prox-map~\eqref{eq:prox-map} is nonlinear in $g_n$ for general $R_n$ and $\Theta$. However,  when $R_n$ is quadratic and $\Theta$ is unconstrained, the convergence of $\norm{\hat{\nabla}_\theta J(\pi_n) }^2$ to zero on average can be guaranteed (see Appendix~\ref{app:proof of th:nonconvex performance} for a discussion).

\subsection{IMITATION GRADIENTS}

While applying mirror descent with a policy gradient can generate a monotonically improving policy sequence, applying the same algorithm with an imitation gradient yields a different behavior. 
The result is summarized below, which is a restatement of~\cite[Theorem 2.1]{ross2014reinforcement}, but is specialized for mirror descent.
\begin{proposition} \label{pr:online convex performance}
Assume $l_n$ is $\sigma$-strongly convex with respect to $R_n$.\footnote{A function  $f$ is said to be $\sigma$-strongly convex with respect to $R$ on a set $\KK$ if 
for all $x, y\in \KK$, $f(x) \geq f(y) + \lr{\nabla f(y)}{x-y} + \sigma D_R(x||y)$. }	Assume $\E[g_n] = \nabla_{\theta} l_n(\pi_n)$ and $\norm{g_n}_*\leq G< \infty$ almost surely. For $\eta_n = \frac{1}{\hat{\sigma} n}$ with $\hat{\sigma} \leq \sigma$, it holds
\begin{align*} \textstyle
\frac{1}{N}\E\left[ \sum_{n=1}^{N} J(\pi_n) \right] \leq J(\pi^*) +  \frac{C_{\pi^*}}{1-\gamma} \left( \epsilon_{\text{class}} + \epsilon_{\text{regret}} \right) 
\end{align*}
where the expectation is due to randomness of sampling $g_n$,
$
\epsilon_{\text{class}} = \sup_{\{\pi_n\}} \inf_{\pi\in\Pi}  \frac{1}{N}\sum_{n=1}^{N} l_n(\pi)
$
 and
$
\epsilon_{\text{regret}}= \frac{G^2 (\log N +1) }{2\hat{\sigma} N }$.
\end{proposition}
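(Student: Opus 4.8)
The plan is to read the proposition as the composition of two reductions: an \emph{imitation-to-RL} reduction that converts the target bound on $\frac{1}{N}\E[\sum_n J(\pi_n)]$ into a bound on the average online loss $\frac{1}{N}\E[\sum_n l_n(\pi_n)]$, and then a standard \emph{online convex optimization} regret analysis for mirror descent with strongly convex losses. For the first reduction I would invoke the surrogate property~\eqref{eq:requirement of imitation} together with Lemma~\ref{lm:performance difference}, which jointly yield the pointwise inequality $J(\pi_n) \le J(\pi^*) + \frac{C_{\pi^*}}{1-\gamma}\, l_n(\pi_n)$ already noted in the text. Averaging over $n$ and taking expectations, it then suffices to prove $\frac{1}{N}\E[\sum_{n=1}^N l_n(\pi_n)] \le \epsilon_{\text{class}} + \epsilon_{\text{regret}}$.

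The second step is the heart of the argument, and I would begin by splitting the realized average loss into a best-in-hindsight term and a regret term,
\[
\tfrac{1}{N}\sum_{n=1}^N l_n(\pi_n) = \inf_{\pi\in\Pi}\tfrac{1}{N}\sum_{n=1}^N l_n(\pi) + \tfrac{1}{N}\Big(\sum_{n=1}^N l_n(\pi_n) - \inf_{\pi\in\Pi}\sum_{n=1}^N l_n(\pi)\Big).
\]
The first term is at most $\epsilon_{\text{class}}$ directly from the definition, since the realized reference sequence $\{\pi_n\}$ is one admissible sequence and is therefore dominated by the supremum over all such sequences. It then remains to show that the expected average regret is bounded by $\epsilon_{\text{regret}}$.

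For the regret bound I would run the classical strongly-convex mirror-descent analysis. Using $\sigma$-strong convexity of $l_n$ with respect to $R_n$ gives, for a fixed comparator $\theta$, the inequality $l_n(\pi_n) - l_n(\pi) \le \lr{\nabla_\theta l_n(\pi_n)}{\theta_n - \theta} - \sigma D_{R_n}(\theta\|\theta_n)$; replacing $\nabla_\theta l_n(\pi_n)$ by the stochastic $g_n$ introduces a term $\lr{\nabla_\theta l_n(\pi_n) - g_n}{\theta_n - \theta}$ whose conditional expectation vanishes by the unbiasedness $\E[g_n]=\nabla_\theta l_n(\pi_n)$, as $\theta_n$ and the comparator are measurable before $g_n$ is drawn. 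For the linearized part I would use the optimality condition of the prox-map~\eqref{eq:prox-map} together with the three-point identity of the Bregman divergence to obtain the standard mirror-descent inequality $\lr{g_n}{\theta_n-\theta} \le \frac{\eta_n}{2}\norm{g_n}_*^2 + \frac{1}{\eta_n}\big(D_{R_n}(\theta\|\theta_n) - D_{R_n}(\theta\|\theta_{n+1})\big)$, with $R_n$ normalized to be $1$-strongly convex with respect to $\norm{\cdot}$.

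Summing over $n$ is where I expect the main obstacle: the telescoping of the Bregman terms must be reconciled with the accumulated corrections $-\sigma D_{R_n}(\theta\|\theta_n)$. The schedule $\eta_n = \frac{1}{\hat\sigma n}$ is precisely what makes this succeed, because then $\frac{1}{\eta_n}-\frac{1}{\eta_{n-1}} = \hat\sigma \le \sigma$, so after summation by parts the net coefficient on each $D_{R_n}(\theta\|\theta_n)$ is $\hat\sigma-\sigma\le 0$ and the leftover boundary term is non-positive; hence every Bregman contribution cancels favorably and only $\sum_n \frac{\eta_n}{2}\norm{g_n}_*^2 \le \frac{G^2}{2\hat\sigma}\sum_{n=1}^N \frac{1}{n} \le \frac{G^2(\log N+1)}{2\hat\sigma}$ survives, which after dividing by $N$ is exactly $\epsilon_{\text{regret}}$. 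The two delicate points I would watch are that the summation-by-parts is clean only for a held-fixed comparator, so I would prove the bound against an arbitrary fixed $\theta$ and then instantiate it at the best-in-hindsight policy while checking the noise term still has zero expectation, and that if $R_n$ genuinely changes with $n$ the cross-iteration divergences must remain compatible for the telescoping; under the stated step size and strong-convexity assumption these are exactly the ingredients that close the estimate.
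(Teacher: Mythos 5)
Your proposal matches the paper's proof essentially step for step: the paper performs the same reduction via Lemma~\ref{lm:performance difference} and the surrogate condition~\eqref{eq:requirement of imitation}, invokes the standard mirror-descent inequality (its Lemma~\ref{lm:mirror descent}, which is your prox-map optimality/three-point step), and telescopes the Bregman terms against the strong-convexity corrections using $\frac{1}{\eta_n}-\frac{1}{\eta_{n-1}} = \hat{\sigma} \leq \sigma$ with the convention $\frac{1}{\eta_0}=0$, exactly as you describe, before bounding $\sum_n \frac{\eta_n}{2}\norm{g_n}_*^2 \leq \frac{G^2(\log N + 1)}{2\hat{\sigma}}$ and passing to expectation by unbiasedness. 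The only cosmetic difference is that the paper proves the regret step in weighted form (Lemma~\ref{lm:weighted online mirror descent}, here instantiated at $w_n=1$) for later reuse in Theorem~\ref{th:weighted random stop}, and it treats the measurability of the best-in-hindsight comparator with the same brevity you flag.
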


Proposition~\ref{pr:online convex performance} is based on the assumption that $l_n$ is strongly convex, which can be verified for certain problems~\citep{cheng2018convergence}.
Consequently, Proposition~\ref{pr:online convex performance} shows that the performance of the policy sequence on average can converge close to the expert's performance $J(\pi^*)$, with additional error that is proportional to $\epsilon_{\text{class}}$ and $\epsilon_{\text{regret}}$. 

$\epsilon_{\text{regret}}$ is an upper bound of the average regret, which is less than $\tilde{O}(\frac{1}{N})$ for a \emph{large} enough step size.\footnote{The step size should be large enough to guarantee $\tilde{O}(\frac{1}{N})$ convergence, where $\tilde{O}$ denotes Big-O but omitting $\log$ dependency. However, it should be bounded since $\epsilon_{\text{regret}} = \Theta\left( \frac{1}{\hat{\sigma}} \right)$.} This characteristic is in contrast to policy gradient, which requires \emph{small} enough step sizes to guarantee local improvement. 
 
$\epsilon_{\text{class}}$ measures the expressiveness of the policy class $\Pi$. It can be \textit{negative} if there is a policy in $\Pi$ that outperforms the expert policy $\pi^*$ in terms of $\tilde{c}$. 
However, 
since online IL attempts to minimize an online upper bound of the accumulated cost through a surrogate loss $\tilde{c}$, 
the policy learned with imitation gradients in general cannot be better than performing one-step policy improvement from the expert policy~\citep{ross2014reinforcement,cheng2018convergence}. Therefore, when the expert is suboptimal, the reduction from nonconvex optimization to online convex optimization can lead to suboptimal policies.

Finally, we note that updating policies with imitation gradients does not necessarily generate a monotonically improving policy sequence, even for deterministic problems;  whether the policy improves monotonically is completely problem dependent~\citep{cheng2018convergence}.  Without going into details, we can see this by comparing policy gradient in~\eqref{eq:policy gradient} and the special case of imitation gradient in~\eqref{eq:aggrevated gradient}. By  Lemma~\ref{eq:performance difference}, we see that 
\begin{align*}
&\E_{d_{\pi_n}} \left( \nabla_\theta \E_{ \pi} \right) [A_{\pi_n}  ]
\\
& = \left(  \nabla_\theta  \E_{d_{\pi}} \right)\E_{ \pi_n}  [A_{\pi^*}  ] + 
\E_{d_{\pi_n}} \left( \nabla_\theta \E_{ \pi} \right) [A_{\pi^*}  ].
\end{align*}
Therefore, even with $\tilde{c} = A_{\pi^*}$, the negative of the direction in~\eqref{eq:aggrevated gradient} is not necessarily a descent direction; namely applying~\eqref{eq:aggrevated gradient} to update the policy is not guaranteed to improve the policy performance locally.

\section{IMITATE-THEN-REINFORCE} \label{sec:method}

\begin{algorithm}[t]
	{\footnotesize
		\caption{\loki }\label{alg:algo} 
		\begin{algorithmic} [1]
			\renewcommand{\algorithmicrequire}{\textbf{Parameters:}}
			\renewcommand{\algorithmicensure}{\textbf{Input:}}		
			\REQUIRE $d$, $N_m$, $N_M$
			\ENSURE $\pi^*$ 
			\STATE Sample  $K$ with probability in~\eqref{eq:K_probability}.			
			\FOR[$\quad$\# Imitation Phase]{$t = 1\dots K$}
				\STATE Collect data $\DD_n$ by executing $\pi_n$ 				
				\STATE Query $g_n$ from~\eqref{eq:imitation gradient} using $\pi^*$
				\STATE Update $\pi_n$ by mirror descent ~\eqref{eq:prox-map} with $g_n$
				\STATE Update advantage function estimate $\hat A_{\pi_n}$ by $\DD_n$
			\ENDFOR
			\FOR[$\quad$\# Reinforcement Phase]{$t = K+1 \dots$}
				\STATE Collect data $\DD_n$ by executing $\pi_n$.
				\STATE Query $g_n$ from~\eqref{eq:policy gradient} f using $\hat A_{\pi_n}$
				\STATE Update $\pi_n$ by mirror descent ~\eqref{eq:prox-map} with $g_n$
				\STATE Update advantage function estimate $\hat A_{\pi_n}$ by $\DD_n$
			\ENDFOR
		\end{algorithmic}
	} 
\end{algorithm}

To combine the benefits from RL and IL, we propose a simple randomized algorithm \loki: first perform $K$ steps of mirror descent with imitation gradient and then switch to policy gradient for the rest of the steps. Despite the algorithm's simplicity, we show that, when $K$ is appropriately randomized, running \loki has similar performance to performing policy gradient steps directly from the expert policy. 

\subsection{ALGORITHM: \loki}

The algorithm \loki is summarized in Algorithm~\ref{alg:algo}. The algorithm is composed of two phases: an imitation phase and a reinforcement phase. In addition to learning rates, \loki receives three hyperparameters ($d$, $N_m$, $N_M$) which determine the probability of random switching at time $K$. As shown in the next section, these three hyperparameters can be selected fairly simply.

\paragraph{Imitation Phase}
Before learning, \loki first randomly samples a number $K \in [N_m, N_M]$ according to the prescribed probability distribution~\eqref{eq:K_probability}. Then it performs $K$ steps of mirror descent with imitation gradient. 
In our implementation, we set 
\begin{align} \label{eq:KL divergence}
\E_{\pi}[\tilde{c}] = \text{KL}(\pi^*||\pi),
\end{align}
which is the KL-divergence between the two policies. It can be easily shown that a proper constant $C^*$ exists satisfying the requirement of $\tilde{c}$ in~\eqref{eq:requirement of imitation}~\citep{gibbs2002choosing}. 
While using~\eqref{eq:KL divergence} does not guarantee learning a policy that outperforms the expert due to $\epsilon_{\text{class}} \geq 0$, with another reinforcement phase available, the imitation phase of \loki is only designed to quickly bring the initial policy closer to the expert policy.	
Compared with choosing $\tilde{c} = A_{\pi^*}$ as in  \aggrevated, one benefit of choosing $\text{KL}(\pi^*||\pi)$ (or its variants, e.g. $\norm{a-a^*}^2$) is that it does not require learning a value function estimator. In addition, the imitation gradient can be calculated through \emph{reparametrization} instead of a likelihood-ratio~\citep{tucker2017rebar}, as now $\tilde{c}$ is presented as a differentiable function in $a$. Consequently, the sampling variance of imitation gradient can be significantly reduced by using multiple samples of $a \sim \pi_n$ (with a single query from the expert policy) and then performing averaging.

\paragraph{Reinforcement Phase}
After the imitation phase, \loki switches to the reinforcement phase.
 At this point, the policy $\pi_K$ is much closer to the expert policy than the initial policy $\pi_0$. In addition, an estimate of $A_{\pi_K}$ is also available. Because the learner's policies were applied to collect data in the previous \emph{online} imitation phase, $A_{\pi_n}$ can already be updated accordingly, for example, by minimizing TD error. 
 Compared with other warm-start techniques, \loki can learn \emph{both} the policy and the advantage estimator in the imitation phase.

\subsection{ANALYSIS}

We now present the theoretical properties of \loki. The analysis is composed of two steps. First, we show the performance of $J(\pi_K)$ in  Theorem~\ref{th:weighted random stop}, a generalization of Proposition~\ref{pr:online convex performance} to consider the effects of non-uniform random sampling. Next, combining Theorem~\ref{th:weighted random stop} and Proposition~\ref{pr:nonconvex performance}, we show the performance of \loki in Theorem~\ref{th:switch}. The proofs are given in Appendix~\ref{app:proof of method}.

\begin{theorem} \label{th:weighted random stop}
	Let $d \geq 0 $, $N_m \geq 1$, and $N_M \geq 2  N_m $. Let $K \in [N_m, N_M]$ be a discrete random variable such that 
	\begin{align} \label{eq:K_probability} 
	\textstyle
	P(K = n ) = n^d / \sum_{m=N_m}^{N_M} m^d.
	\end{align}
	Suppose $l_n$ is $\sigma$-strongly convex with respect to $R_n$,  $\E[g_n] = \nabla_{\theta} l_n(\pi_n)$, and $\norm{g_n}^* \leq G < \infty$ almost surely.
	Let $\{\pi_n\}$ be generated by running mirror descent with step size
	$\eta_n = n^d/  \hat{\sigma} \sum_{m=1}^{n} m^d$. 
	For $\hat{\sigma} \leq \sigma $, it holds that 
	\begin{align*}
	\E\left[ J(\pi_{K}) \right] 
	\leq   J(\pi^*) + \Delta, 
	\end{align*}
	where  the expectation is due to  sampling $K$ 
	and $g_n$, $\Delta = \frac{C_{\pi^*}}{1-\gamma } \left( \epsilon^w_{\text{class}} +
2^{-d}	\hat{\sigma} D_{\RR} 
+   G^2  C_{N_M}/\hat{\sigma} N_M    \right)$,  $D_{\RR} = \sup_{R \in \RR} \sup_{\pi, \pi' \in \Pi} D_R(\pi' || \pi)$, 
	$
	\epsilon^w_{\text{class}} \coloneqq \sup_{\{w_n\},\{\pi_n\}} \inf_{\pi\in\Pi} \frac{\sum_{n=1}^{N} w_n l_n(\pi)}{\sum_{n=1}^{N} w_n}
	$, and 
	\begin{align*}
	C_{N_M} = \begin{cases}
	\log(N_M)+1,  & \text{if $d=0$}\\
	\frac{8d }{3} \exp\left( \frac{d}{N_M} \right), & \text{if $d\geq 1$}
	\end{cases}
	\end{align*}
\end{theorem}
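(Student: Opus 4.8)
The plan is to reduce the statement to a \emph{weighted} online-to-batch argument that mirrors the proof of Proposition~\ref{pr:online convex performance} but carries the non-uniform weights and sums only over the window $[N_m,N_M]$. Write $w_n = n^d$ and $W_n = \sum_{m=1}^n m^d$, so that the sampling law is $P(K=n) = w_n/\sum_{m=N_m}^{N_M} w_m$ and the step size is $\eta_n = w_n/(\hat\sigma W_n)$. First I would expand the expectation over $K$,
\[
\E[J(\pi_K)] = \frac{\sum_{n=N_m}^{N_M} w_n\, \E[J(\pi_n)]}{\sum_{n=N_m}^{N_M} w_n},
\]
and invoke the defining inequality~\eqref{eq:requirement of imitation} together with Lemma~\ref{lm:performance difference} to get $J(\pi_n) \le J(\pi^*) + \frac{C_{\pi^*}}{1-\gamma} l_n(\pi_n)$. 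It then suffices to bound the weighted average $\big(\sum_{n=N_m}^{N_M} w_n \E[l_n(\pi_n)]\big)/\big(\sum_{n=N_m}^{N_M} w_n\big)$ by $\epsilon^w_{\text{class}}$ plus the two error terms of $\Delta$.

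Next I would run the per-round mirror-descent analysis. For any comparator $\pi$ with parameter $\theta$, strong convexity of $l_n$ with respect to $R_n$ gives $l_n(\pi_n) - l_n(\pi) \le \lr{\nabla l_n(\pi_n)}{\theta_n - \theta} - \sigma D_{R_n}(\theta||\theta_n)$, while optimality of the prox-map~\eqref{eq:prox-map} yields the one-step bound $\lr{g_n}{\theta_n - \theta} \le \frac{1}{\eta_n}\big(D_{R_n}(\theta||\theta_n) - D_{R_n}(\theta||\theta_{n+1})\big) + \frac{\eta_n}{2}\norm{g_n}_*^2$. Taking expectations (using $\E[g_n] = \nabla l_n(\pi_n)$ and $\norm{g_n}_* \le G$), multiplying by $w_n$, and summing over the window, I substitute $\eta_n = w_n/(\hat\sigma W_n)$ so that $w_n/\eta_n = \hat\sigma W_n$, which makes the Bregman terms telescope.

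The crux is the telescoping bookkeeping. Using $W_n - W_{n-1} = w_n$, the coefficient multiplying each interior term $D_{R_n}(\theta||\theta_n)$ collapses to $w_n(\hat\sigma - \sigma) \le 0$ since $\hat\sigma \le \sigma$, so all interior divergences are dropped; the only surviving positive contribution is the boundary term $\hat\sigma W_{N_m} D_{R_{N_m}}(\theta||\theta_{N_m})$ at the start of the window, which I bound by $\hat\sigma W_{N_m} D_{\RR}$. After normalizing by $\sum_{n=N_m}^{N_M} w_n$, the hypothesis $N_M \ge 2N_m$ with $d \ge 0$ gives $W_{N_m}/\sum_{n=N_m}^{N_M} w_n \le 2^{-d}$ (via $\sum_{m=1}^n m^d \asymp n^{d+1}/(d+1)$, since at $N_M=2N_m$ the ratio is $1/(2^{d+1}-1) \le 2^{-d}$), producing the $2^{-d}\hat\sigma D_{\RR}$ term. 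The noise term $\frac{G^2}{2\hat\sigma}\sum_{n=N_m}^{N_M} w_n^2/W_n = \frac{G^2}{2\hat\sigma}\sum_{n=N_m}^{N_M} n^{2d}/W_n$, again divided by $\sum w_n$, gives the $G^2 C_{N_M}/(\hat\sigma N_M)$ term, and evaluating this sum with the power-sum bounds yields $C_{N_M} = \log N_M + 1$ for $d=0$ and $\frac{8d}{3}\exp(d/N_M)$ for $d \ge 1$. Finally, choosing $\pi$ to be the infimizer of the weighted empirical loss converts the comparator sum into $\epsilon^w_{\text{class}}$, and assembling the pieces gives the claimed $\Delta$.

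The main obstacle I anticipate is making the two normalized constants tight. The $2^{-d}$ factor depends on a clean two-sided estimate of the partial power sums $\sum m^d$ that survives for all $d \ge 0$ under only $N_M \ge 2N_m$, and the exact form of $C_{N_M}$—particularly the $\frac{8d}{3}\exp(d/N_M)$ bound—requires carefully lower-bounding $W_n$ and upper-bounding $\sum n^{2d}/W_n$ rather than the crude asymptotics above. A secondary technical point is that $R_n$ may change across rounds, so the Bregman telescoping is exact only if one either fixes $R_n \equiv R$ or absorbs the per-step mismatch $D_{R_{n+1}} - D_{R_n}$ into the uniform bound $D_{\RR} = \sup_{R\in\RR}\sup_{\pi,\pi'\in\Pi} D_R(\pi'||\pi)$; I would apply $D_{\RR}$ precisely at the window boundary and wherever the regularizer changes.
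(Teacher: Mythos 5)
Your proposal takes essentially the same route as the paper's proof: the paper packages your weighted mirror-descent regret bound as a standalone lemma (with the telescoping coefficient $w_n(\hat\sigma-\sigma)\le 0$ and the boundary Bregman term bounded by $D_{\RR}$), then applies exactly your weighted online-to-batch conversion over $[N_m,N_M]$, the power-sum estimates $\frac{n^{d+1}-(m-1)^{d+1}}{d+1}\le\sum_{k=m}^n k^d\le\frac{(n+1)^{d+1}-m^{d+1}}{d+1}$, and the $N_M\ge 2N_m$ ratio argument to obtain $2^{-d}$ and $C_{N_M}$. One bookkeeping correction: since $1/\eta_{N_m-1}=\hat\sigma\sum_{m=1}^{N_m-1}w_m$, the surviving boundary term is $\hat\sigma W_{N_m-1}D_{\RR}$ rather than your $\hat\sigma W_{N_m}D_{\RR}$ (the strong-convexity subtraction at round $N_m$ absorbs the extra $w_{N_m}$), and it is this tighter version that makes the $2^{-d}$ constant valid for all $N_m\ge 1$.
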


Suppose $N_M \gg d$. Theorem~\ref{th:weighted random stop} says that the performance of $J(\pi_{K})$ in expectation converges to $J(\pi^*)$ in a rate of $\tilde{O}(d/N_M)$ when a proper step size is selected. 
In addition to the convergence rate, we notice that the performance gap between $J(\pi^*)$ and $J(\pi_K)$
is bounded by $O(\epsilon_{\text{class}}^w + 2^{-d} D_\RR)$. $\epsilon_{\text{class}}^w$ is a weighted version of the expressiveness measure of policy class $\Pi$ in Proposition~\ref{pr:online convex performance}, which can be made small if $\Pi$ is rich enough with respect to the \emph{suboptimal} expert policy. $D_\RR$ measures the size of the decision space with respect to the class of regularization functions $\RR$ that the learner uses in mirror descent. The dependency on $D_{\RR}$ is because Theorem~\ref{th:weighted random stop} performs a suffix random sampling with $N_m > 0$.
While the presence of $D_\RR$ increases the gap, its influence can easily made small with a slightly large $d$ due to the factor $2^{-d}$. 

In summary, due to the sublinear convergence rate of IL, $N_M$ does not need to be large (say less than 100) as long as $N_M \gg d$; on the other hand, due to the $2^d$ factor, $d$ is also small (say less than $5$) as long as it is  large enough to cancel out the effects of $D_\RR$. Finally, we note that, like Proposition~\ref{pr:online convex performance}, Theorem~\ref{th:weighted random stop} encourages using larger step sizes, which can further boost the convergence of the policy in the imitation phase of \loki.

Given Proposition~\ref{pr:nonconvex performance} and Theorem~\ref{th:weighted random stop}, now it is fairly easy to understand the performance of \loki.
\begin{theorem} \label{th:switch}
Running \loki holds that
\begin{align*} 
&\textstyle
\E\left[ J(\pi_{N}) \right] \leq J(\pi^* ) + \Delta  \\
&\textstyle
+ \E\left[ \sum_{n=K+1}^{N}\frac{2\eta_n}{\alpha_n}   \norm{\nabla_\theta J(\pi_n) - g_n}_*^2 \right]\\
&\textstyle
  +  \frac{1}{2}\E\left[ \sum_{n=K+1}^{N}\left( - \alpha_n \eta_n +  \frac{ \beta \eta_n^2}{2} \right)   \norm{\hat{\nabla}_\theta J(\pi_n) }^2 \right] ,
\end{align*}
where the expectation is due to  sampling $g_n$ and $K$.
\end{theorem}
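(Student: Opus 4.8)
The plan is to peel \loki apart along its two phases and then stitch the two earlier results together by the tower property of conditional expectation. The key observation is that the reinforcement phase is nothing more than an ordinary policy-gradient run started from the (random) iterate $\pi_K$ handed over by the imitation phase and carried out over steps $K+1,\dots,N$. Thus Proposition~\ref{pr:nonconvex performance} should apply to it essentially verbatim, after an index shift, once we freeze the outcome of the imitation phase; and Theorem~\ref{th:weighted random stop} controls the quality of that handoff policy $\pi_K$.

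Concretely, first I would let $\FF_K$ denote the $\sigma$-algebra generated by the random switching time $K$ together with all sampling noise accumulated during the imitation phase, so that both $K$ and $\pi_K$ are $\FF_K$-measurable. Conditioned on $\FF_K$, the starting policy $\pi_K$ and the number of reinforcement steps $N-K$ are fixed, and steps $K+1,\dots,N$ are a standard mirror-descent run with the unbiased policy-gradient oracle $\E[g_n]=\nabla_\theta J(\pi_n)$. Applying Proposition~\ref{pr:nonconvex performance} to this run, with $\pi_K$ in the role of the initial condition and the summation index shifted to start at $K+1$, and using $\eta_n\le 2\alpha_n/\beta$ in the reinforcement phase, yields the conditional bound
\begin{align*}
\E\!\left[J(\pi_N)\mid \FF_K\right]
&\le J(\pi_K) + \E\Bigl[\textstyle\sum_{n=K+1}^{N}\tfrac{2\eta_n}{\alpha_n}\norm{\nabla_\theta J(\pi_n)-g_n}_*^2 \Bigm| \FF_K\Bigr] \\
&\quad + \tfrac12\,\E\Bigl[\textstyle\sum_{n=K+1}^{N}\bigl(-\alpha_n\eta_n+\tfrac{\beta\eta_n^2}{2}\bigr)\norm{\hat{\nabla}_\theta J(\pi_n)}^2 \Bigm| \FF_K\Bigr].
\end{align*}

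Next I would take the expectation over $\FF_K$ (over both $K$ and the imitation-phase noise) and invoke the tower property $\E[J(\pi_N)]=\E[\E[J(\pi_N)\mid\FF_K]]$. Linearity collapses the nested expectations of the two error sums into the single expectations $\E[\sum_{n=K+1}^{N}\cdots]$ appearing in the statement, and the leading term becomes $\E[J(\pi_K)]$. Finally I would bound $\E[J(\pi_K)]$ using Theorem~\ref{th:weighted random stop}, whose hypotheses---$\sigma$-strong convexity of $l_n$ relative to $R_n$, an unbiased and almost-surely bounded imitation gradient, and the prescribed sampling law~\eqref{eq:K_probability} and step-size schedule for the first $K$ steps---are exactly those governing the imitation phase; this gives $\E[J(\pi_K)]\le J(\pi^*)+\Delta$. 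Substituting produces precisely the claimed inequality.

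The main obstacle is purely probabilistic bookkeeping: Proposition~\ref{pr:nonconvex performance} is stated for a \emph{deterministic} initial policy and a \emph{fixed} horizon, whereas here both the starting iterate $\pi_K$ and the residual horizon $N-K$ are random. Conditioning on $\FF_K$ is the clean resolution---once the imitation phase is frozen, everything downstream is a standard policy-gradient run, and independence of the future sampling noise from $\FF_K$ lets the proposition be applied conditionally. The one subtlety to verify is that the step-size condition $\eta_n\le 2\alpha_n/\beta$ and the unbiasedness $\E[g_n]=\nabla_\theta J(\pi_n)$ hold for \emph{every} reinforcement-phase index irrespective of the realized value of $K$, so that the conditional application of Proposition~\ref{pr:nonconvex performance} is uniformly valid before the outer expectation over $K$ is taken.
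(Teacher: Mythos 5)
Your proof is correct and takes essentially the same route as the paper: the paper treats Theorem~\ref{th:switch} as an immediate consequence of applying Proposition~\ref{pr:nonconvex performance} to the reinforcement phase started from $\pi_K$ and then bounding $\E\left[J(\pi_K)\right] \leq J(\pi^*) + \Delta$ via Theorem~\ref{th:weighted random stop}, which is exactly your decomposition. Your conditioning on $\FF_K$ and tower-property argument merely makes explicit the probabilistic bookkeeping (random initial iterate $\pi_K$, random residual horizon) that the paper leaves implicit.
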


Firstly, Theorem~\ref{th:switch} shows that $\pi_N$ can perform better than the expect policy $\pi^*$, and, in fact, it converges to a locally optimal policy on average under the same assumption as in Proposition~\ref{pr:nonconvex performance}.
Compare with to running policy gradient steps directly from the expert policy, running \loki introduces an additional gap $O(\Delta + K \norm{\hat{\nabla}_\theta J(\pi)}^2)$. However, as discussed previously, $\Delta$ and $K \leq N_M \ll N$ are reasonably small, for usual $N$ in RL. Therefore, performing \loki almost has the same effect as using the expert policy as the initial condition, which is the best we can hope for when having access to an expert policy.

We can also compare \loki with performing usual policy gradient updates from a randomly initialized policy. The performance difference can be easily shown as $O( J(\pi^*) - J(\pi_0) + \Delta + K \norm{\hat{\nabla}_\theta J(\pi)}^2)$. Therefore, if  performing $K$ steps of policy gradient from $\pi_0$ gives a policy with performance worse than $J(\pi^*) + \Delta$, then \loki is favorable.

\section{RELATED WORK} \label{sec:related work}

We compare \loki with some recent attempts to incorporate the loss information $c$ of RL into IL so that it can learn a policy that outperforms the expert policy. As discussed in Section~\ref{sec:comparison}, when $\tilde{c} = A_{\pi^*}$, \textsc{AggreVaTe(D)} can potentially learn a policy that is better than the expert policy~\citep{ross2014reinforcement,sun2017deeply}. However, implementing \textsc{AggreVaTe(D)} exactly as suggested by theory can be difficult and inefficient in practice.
On the one hand, while $A_{\pi^*}$ can be learned off-policy using samples collected by running the expert policy, usually the estimator quality  is unsatisfactory due to covariate shift. On the other hand, if $A_{\pi^*}$ is learned on-policy, it requires restarting the system from any state, or requires performing $\frac{1}{1-\gamma}$-times more iterations to achieve the same convergence rate as other choices of $\tilde{c}$ such as  $\text{KL}(\pi^*||\pi) $ in \loki;  both of which are impractical for usual RL problems. 

Recently, \citet{sun2018truncated} proposed \thor (Truncated HORizon policy search) which solves a truncated RL problem with the expert's value function as the terminal loss to alleviate the strong dependency of \aggrevated on the quality of $ A_{\pi^*}$. 
Their algorithm  uses an $H$-step truncated advantage function defined as
{$A^{H,\pi^*}_{\pi_n,t} = \E_{\rho_{\pi_n}} [\sum_{\tau=t}^{t+H-1} \gamma^{\tau -t }c(s_\tau, a_\tau) + \gamma^H V_{\pi^*}(s_{t+H}) - V_{\pi^*}(s_t)]$}.
While empirically the authors show that the learned policy can  improve over the expert policy, the theoretical properties of \thor remain somewhat unclear.\footnote{The algorithm actually implemented by~\citet{sun2018truncated} does not solve precisely the same problem analyzed in theory.} 
In addition, \thor is more convoluted to implement and relies on multiple advantage function estimators. By contrast, \loki has stronger theoretical guarantees, while being straightforward to implement with off-the-shelf learning algorithms.

Finally, we compare \loki with \lols (Locally Optimal Learning
to Search), proposed by~\cite{chang2015learning}. \lols is an online IL algorithm which sets $\tilde{c} = Q_{\hat{\pi}^{\lambda}_n}$, where $\lambda \in [0,1]$ and $\hat{\pi}^{\lambda}_n$ is a mixed policy that at each time step chooses to run the current policy $\pi_{n}$ with probability $1-\lambda$ and the expert policy $\pi^*$ with probability $\lambda$. Like \aggrevated, \lols suffers from the impractical requirement of estimating $Q_{\hat{\pi}^{\lambda}_n}$, which relies on the state resetting assumption. 

Here we show that such difficulty can be addressed by using the mirror descent framework  with $g_n$ as an estimate of $\nabla_{\theta} l_n^{\lambda} (\pi_n) $, where 
$
l_n^{\lambda}(\pi) \coloneqq \E_{d_{\pi_n}} \E_{ \pi}[ (1-\lambda) A_{\pi_n} + \lambda A_{\pi^*} ]
$.
That is, the first-order oracle is simply a convex combination of policy gradient and \aggrevated gradient. 
We call such linear combination~\slols (\slolsfull) and we show it has the same performance guarantee as \lols.
\begin{theorem} \label{th:mix policy}	
Under the same assumption in Proposition~\ref{pr:online convex performance}, 
running \slols generates a policy sequence, with randomness due to sampling $g_n$, satisfying
	\begin{small}
		\begin{align*}
		 \frac{1}{N} \E\left[ \sum_{n=1}^{N}  J(\pi_n)  - \left(  (1-\lambda)J^*_{\pi_n}  + \lambda J(\pi^*) \right)  \right] 
		\leq  \frac{\epsilon^{\lambda}_{\text{class}} + \epsilon^\lambda_{\text{regret}}}{1-\gamma}
		\end{align*}
	\end{small}
	where $J_{\pi_n}^* =  \min_{\pi \in \Pi} \E_{d_{\pi_n}}\E_{\pi}[Q_{\pi_n}] \eqqcolon  \E_{d_{\pi_n}}[ V^*_{\pi_n}]$ and
	\begin{small}
		$
		\epsilon^{\lambda}_{\text{class}} = \min_{\pi \in \Pi} \frac{1}{N} ( \sum_{n=1}^{N} \E_{d_{\pi_n}}  \E_{ \pi} [ (1-\lambda) Q_{\pi_n} + \lambda Q_{\pi^*}  ]) \\
		- \frac{1}{N}( \sum_{n=1}^{N} \E_{d_{\pi_n}}[ (1-\lambda) V^*_{\pi_n} + \lambda V_{\pi^*} ] )
		$.
	\end{small}
\end{theorem}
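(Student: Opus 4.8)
The plan is to treat \slols as an instance of online convex optimization whose per-round losses are $l_n^{\lambda}(\pi) = \E_{d_{\pi_n}}\E_{\pi}[(1-\lambda)A_{\pi_n} + \lambda A_{\pi^*}]$, in exactly the way \aggrevated and \daggered were analyzed for Proposition~\ref{pr:online convex performance}. Since the oracle $g_n$ is an unbiased estimate of $\nabla_\theta l_n^{\lambda}(\pi_n)$ and, under the stated assumption, $l_n^{\lambda}$ is $\sigma$-strongly convex with respect to $R_n$, running mirror descent with the prescribed step size gives the very same regret guarantee as before, namely
\begin{align*}
\frac{1}{N}\sum_{n=1}^{N} l_n^{\lambda}(\pi_n) \leq \min_{\pi \in \Pi}\frac{1}{N}\sum_{n=1}^{N} l_n^{\lambda}(\pi) + \epsilon^{\lambda}_{\text{regret}},
\end{align*}
with $\epsilon^{\lambda}_{\text{regret}} = G^2(\log N + 1)/(2\hat{\sigma}N)$. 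I would reuse the proof of Proposition~\ref{pr:online convex performance} essentially verbatim, since the only change is the identity of the loss.

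The heart of the argument is converting both sides of this regret inequality into statements about $J$. For the left-hand side I would evaluate $l_n^{\lambda}$ at the played iterate: because $\E_{\pi_n}[A_{\pi_n}] = \E_{\pi_n}[Q_{\pi_n}] - V_{\pi_n} = 0$, only the expert term survives, and Lemma~\ref{lm:performance difference} applied with matching state occupancy $d_{\pi_n}$ yields $l_n^{\lambda}(\pi_n) = \lambda\,\E_{d_{\pi_n}}\E_{\pi_n}[A_{\pi^*}] = \lambda(1-\gamma)(J(\pi_n) - J(\pi^*))$. For the right-hand side I would split each $l_n^{\lambda}(\pi)$ into an action-dependent part $\E_{d_{\pi_n}}\E_{\pi}[(1-\lambda)Q_{\pi_n} + \lambda Q_{\pi^*}]$ and a state-only part $\E_{d_{\pi_n}}[(1-\lambda)V_{\pi_n} + \lambda V_{\pi^*}]$ that is constant in $\pi$. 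The minimizer then acts only on the action-dependent part, which is precisely the first term in the definition of $\epsilon^{\lambda}_{\text{class}}$, while the $V^*_{\pi_n}$ bookkeeping is supplied by $J^*_{\pi_n} = \E_{d_{\pi_n}}[V^*_{\pi_n}]$. This identifies $\min_\pi \frac{1}{N}\sum_n l_n^{\lambda}(\pi)$ with $\epsilon^{\lambda}_{\text{class}}$ plus the residual local-optimality gap $(1-\lambda)\frac{1}{N}\sum_n(J^*_{\pi_n} - \E_{d_{\pi_n}}[V_{\pi_n}])$.

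Finally I would assemble the pieces: substituting the two evaluations into the regret inequality, adding back the local term $(1-\lambda)(J(\pi_n) - J^*_{\pi_n})$ to reconstruct the combined benchmark $(1-\lambda)J^*_{\pi_n} + \lambda J(\pi^*)$, and dividing through by $(1-\gamma)$. The main obstacle I anticipate is exactly this bookkeeping step. The expert term is exact because Lemma~\ref{lm:performance difference} is applied with the matching occupancy $d_{\pi_n}$, but the non-expert term forces me to reconcile the true return $J(\pi_n)$ that appears in the target with the discounted-occupancy value $\E_{d_{\pi_n}}[V_{\pi_n}]$ that the surrogate loss naturally produces, and with the surrogate benchmark $J^*_{\pi_n}$ (which is an advantage-gap under $d_{\pi_n}$ rather than a genuine return difference). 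Getting the normalization of $J^*_{\pi_n}$ to line up correctly is the delicate part; I would handle it by carefully tracking the $(1-\gamma)$ factors introduced by the performance difference lemma and, where necessary, invoking boundedness of the cost to control the residual gap. The remaining manipulations are routine rearrangement.
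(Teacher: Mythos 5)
Your proposal follows essentially the same route as the paper's proof: the same per-round loss $l_n^{\lambda}$ analyzed via the regret bound of Proposition~\ref{pr:online convex performance}, the same observation that $\E_{\pi_n}[A_{\pi_n}]=0$ leaves only the expert term at the played iterates, and the same $Q$/$V$ split of the comparator term, with your residual gap $(1-\lambda)\frac{1}{N}\sum_n\bigl(J^*_{\pi_n}-\E_{d_{\pi_n}}[V_{\pi_n}]\bigr)$ matching exactly the cancellation the paper performs via the identity $\E_{d_{\pi_n}}\E_{\pi_n}[Q_{\pi_n}]=(1-\gamma)J(\pi_n)$. The $(1-\gamma)$ bookkeeping you flag as delicate is indeed the only subtle point, and the paper resolves it with precisely that identity (implicitly fixing the normalization of $J^*_{\pi_n}$), so your plan is correct as stated.
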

In fact, the performance  in Theorem~\ref{th:mix policy} is actually a lower bound of Theorem 3 in~\citep{chang2015learning}.\footnote{The main difference is due to technicalities. In~\cite{chang2015learning}, $\epsilon^{\lambda}_{\text{class}}$ is compared with a time-varying policy.}
Theorem~\ref{th:mix policy} says that on average $\pi_n$ has performance between the expert policy $J(\pi^*)$ and the intermediate cost $J_{\pi_n}^*$, as long as $\epsilon^{\lambda}_{\text{class}}$ is small (i.e., there exists a single policy in $\Pi$ that is better than the expert policy or the local improvement from any policy in $\Pi$). 
However, due to the presence of $\epsilon^{\lambda}_{\text{class}}$, despite $J_{\pi_n}^* \leq J(\pi_n)$, it is not guaranteed that $J_{\pi_n}^* \leq J(\pi^*)$. As in~\cite{chang2015learning}, either \lols or \slols can necessarily perform on average better than the expert policy $\pi^*$. 
Finally, we note that recently both~\citet{nair2017overcoming} and~\citet{rajeswaran2017learning} propose a scheme similar to \slols, but with the \textsc{AggreVaTe(D)} gradient computed using offline batch data collected by the expert policy. However, there is no theoretical analysis of this algorithm's performance.

\section{EXPERIMENTS}
We evaluate \loki on several robotic control tasks from OpenAI Gym~\citep{openaigym} with the DART physics engine~\citep{Lee2018}\footnote{The environments are defined in DartEnv, hosted at https://github.com/DartEnv.} and compare it with several baselines:
 \trpo~\citep{schulman2015trust}, \trpo from expert, 
 \daggered (the first-order version of \dagger~\citep{ross2011reduction} in~\eqref{eq:daggered gradient}),
 \slols (Section~\ref{sec:related work}), and
 \thor~\citep{sun2018truncated}.

\subsection{TASKS}
We consider the following tasks. In all tasks, the discount factor of the RL problem is set to $\gamma=0.99$. The details of each task are specified in Table~\ref{table:tasks} in Appendix~\ref{app:task details}. 

\paragraph{Inverted Pendulum} This is a classic control problem, and its goal is to swing up an pendulum and to keep it balanced in a upright posture. The difficulty of this task is that the pendulum cannot be swung up directly due to a torque limit.
\paragraph{Locomotion} 
The goal of these tasks (Hopper, 2D Walker, and 3D Walker) is to control a walker to 
move forward as quickly as possible without falling down.
In Hopper, the walker is a monoped, which is subjected to significant contact discontinuities, whereas the walkers in the other tasks are bipeds. 
In 2D Walker, the agent is constrained to a plane to simplify balancing.
\paragraph{Robot Manipulator} In the Reacher task, a 5-DOF (degrees-of-freedom) arm is controlled to reach a random target position in 3D space. The reward consists of the negative distance to the target point from the finger tip plus a control magnitude penalty. 
The actions correspond to the torques applied to the $5$ joints.
 
\subsection{ALGORITHMS}

We compare five algorithms (\loki, \trpo, \daggered, \thor, \slols) and the idealistic setup of performing policy gradient steps directly from the expert policy (\ideal).
To facilitate a fair comparison, all the algorithms are implemented based on a publicly available \trpo implementation~\citep{baselines}. Furthermore, they share the same parameters except for those that are unique to each algorithm as listed in Table~\ref{table:tasks} in Appendix~\ref{app:task details}. The experimental results averaged across $25$ random seeds are reported in Section~\ref{sec:experimental results}.

\paragraph{Policy and Value Networks}
Feed-forward neural networks are used to construct the policy networks and the value networks in all the tasks (both have two hidden layers and 32 tanh units per layer). 
We consider Gaussian stochastic policies, i.e. for any state $s \in \Sbb$, $\pi(a|s)$ is Gaussian distributed. 
The mean of the Gaussian $\pi(a|s)$, as a function of state, is modeled by the policy network, and the covariance matrix of Gaussian is restricted to be diagonal and independent of state. The policy networks and the value function networks are initialized randomly, except for the ideal setup (\trpo from expert), which is initialized as the expert.

\paragraph{Expert Policy} 
The same sub-optimal expert is used by all algorithms (\loki, \daggered, \slols, and \thor). It is obtained by running \trpo and stopping it before convergence. The estimate of the expert value function $V_{\pi^*}$ (required by \slols and \thor) is learned by minimizing the sum of squared TD(0) error on a large separately collected set of demonstrations of this expert. The final explained variance for all the tasks is more than $0.97$ (see Appendix~\ref{app:task details}).

\paragraph{First-Order Oracles}
The on-policy advantage $A_{\pi_n}$ in the first-order oracles for \trpo, \slols, and \loki (in the reinforcement phase) is implemented using an on-policy value function estimator and Generalized Advantage Estimator (GAE)~\citep{schulman2015high}.
For \daggered and the imitation phase of \loki, the first-order oracle is calculated using~\eqref{eq:KL divergence}.
For \slols, we use the estimate $A_{\pi^*}(s_t, a_t) \approx  c(s_t,a_t) + \gamma \hat{V}_{\pi^*}(s_{t+1}) - \hat{V}_{\pi^*}(s_t)$. 
And for \thor, $A_{\pi_n, t}^{H, \pi^*}$ of the truncated-horizon problem is approximated by Monte-Carlo samples with an on-policy value function baseline estimated by regressing on these Monte-Carlo samples.
Therefore, for all methods, an on-policy component is used in constructing the first-order oracle. 
The exponential weighting in GAE is $0.98$; the mixing coefficient $\lambda$ in \slols is $0.5$; $N_M$ in \loki is reported in Table~\ref{table:tasks} in Appendix A, and $N_m = \floor*{\frac{1}{2}N_M}$, and $d=3$.

\paragraph{Mirror Descent}
After receiving an update direction $g_n$ from the first-order oracle, a KL-divergence-based trust region is specified. This is equivalent to setting the strictly convex function $R_n$ in mirror descent to $\frac{1}{2} \theta^\top F(\theta_n) \theta$ and choosing a proper learning rate.
In our experiments, a larger KL-divergence limit ($0.1$) is selected for imitation gradient~\eqref{eq:KL divergence} (in \daggered and in the imitation phase of \loki), and a smaller one ($0.01$) is set for all other algorithms.
This decision follows the guideline provided by the theoretical analysis in Section~\ref{sec:imitation gradient description} and is because of the low variance in calculating the gradient of~\eqref{eq:KL divergence}.
Empirically, we observe using the larger KL-divergence limit with policy gradient led to high variance and instability.

\begin{figure*}[h!]
	\captionsetup[subfloat]{farskip=0pt,captionskip=0pt}
	\centering
	\subfloat{\includegraphics[width = \linewidth/3]{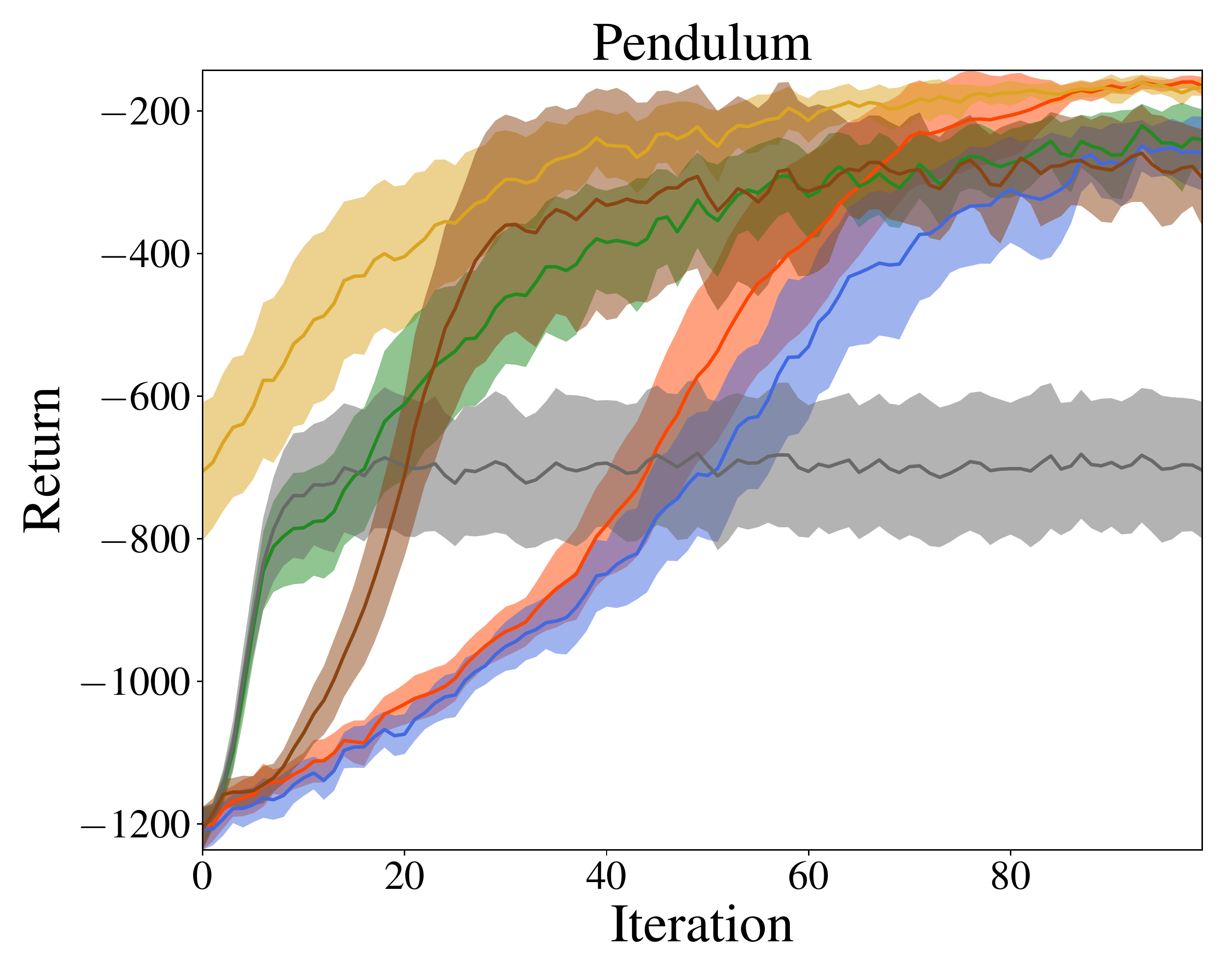}} \hfill
	\subfloat{\includegraphics[width = \linewidth/3]{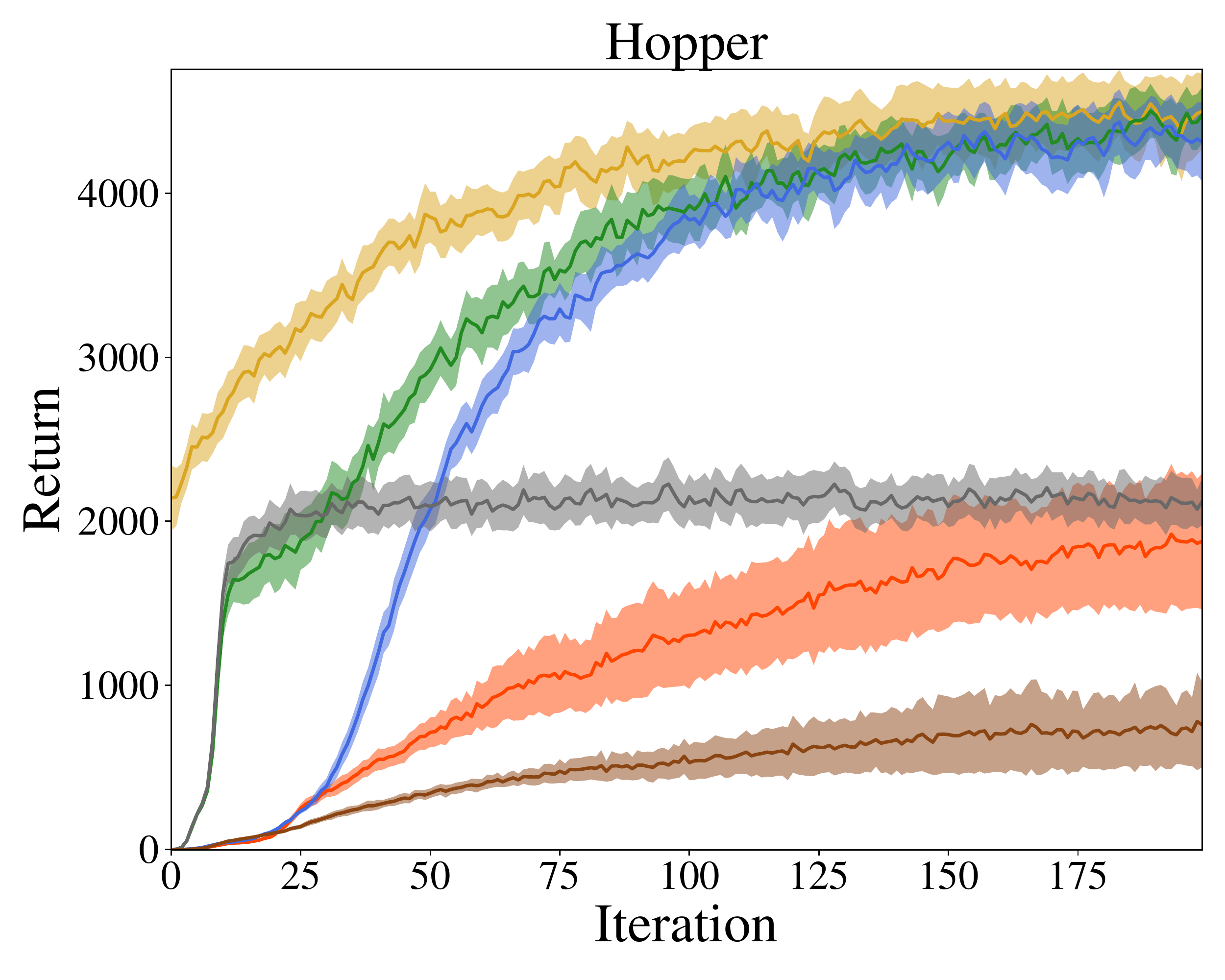}} \hfill 
	\subfloat{\includegraphics[width = \linewidth/3]{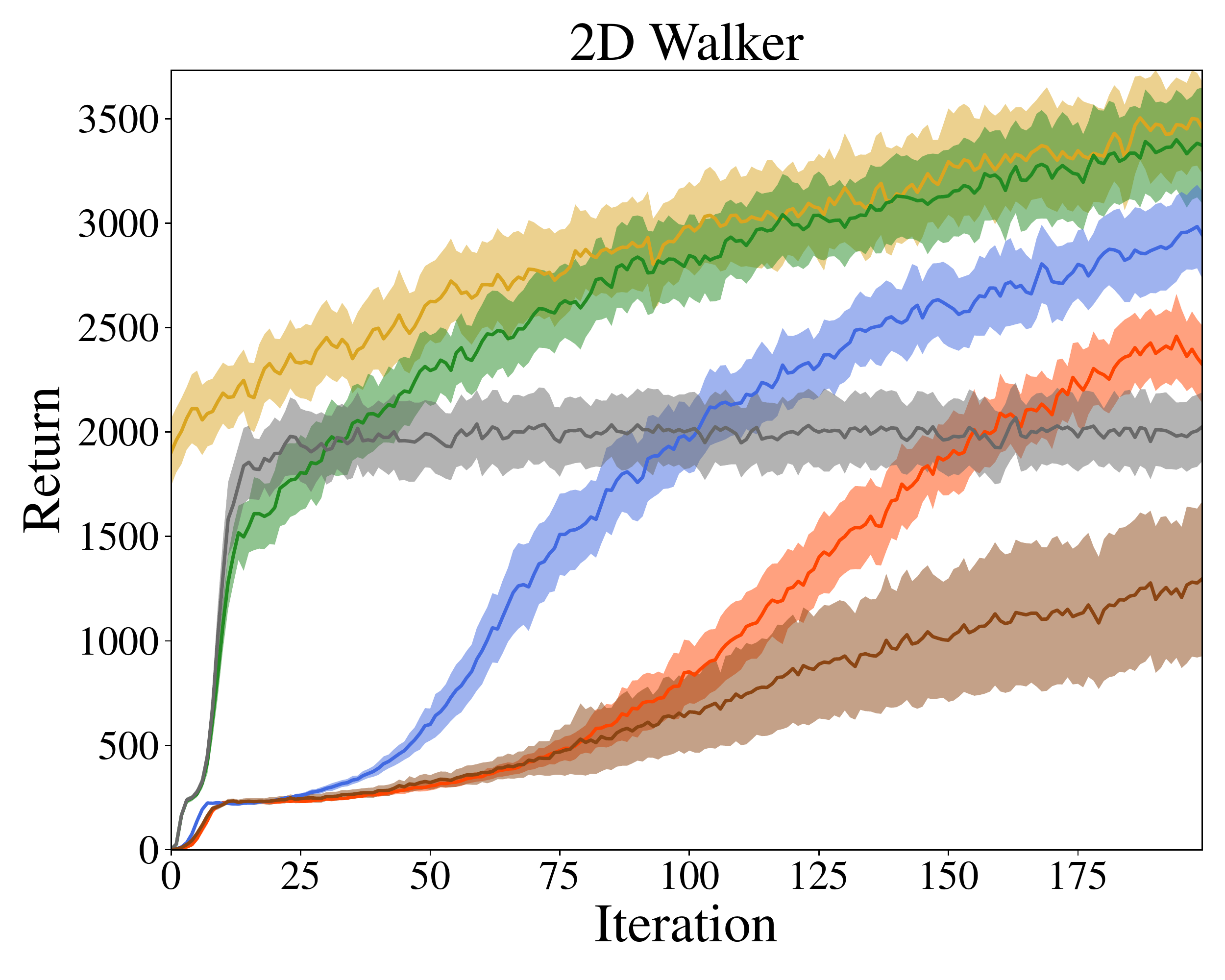}} \\ 
	\subfloat{\includegraphics[width = \linewidth/3]{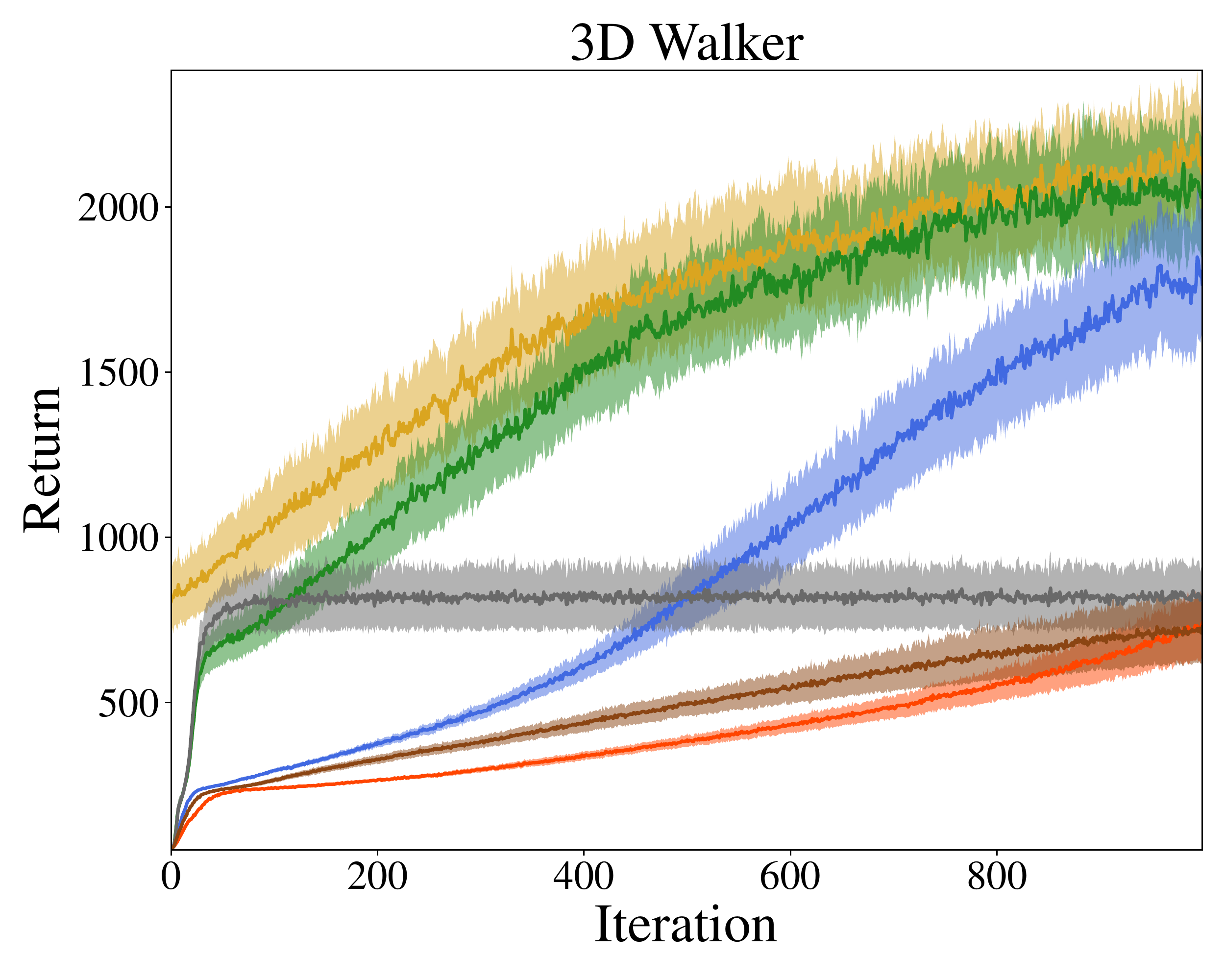}} \hfill
    \subfloat{\includegraphics[width = \linewidth/3]{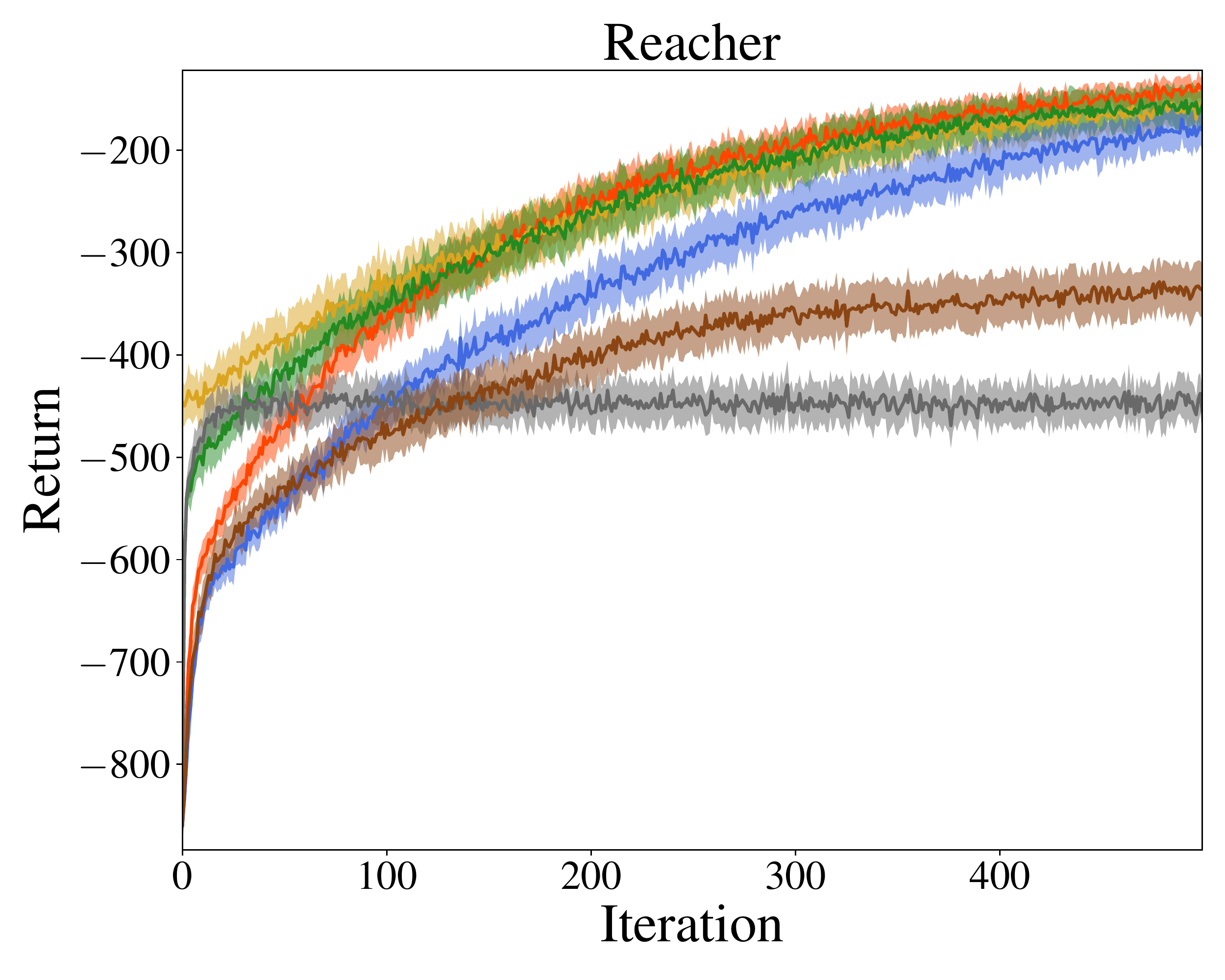}} \hfill
    \subfloat{\includegraphics[width = \linewidth/3]{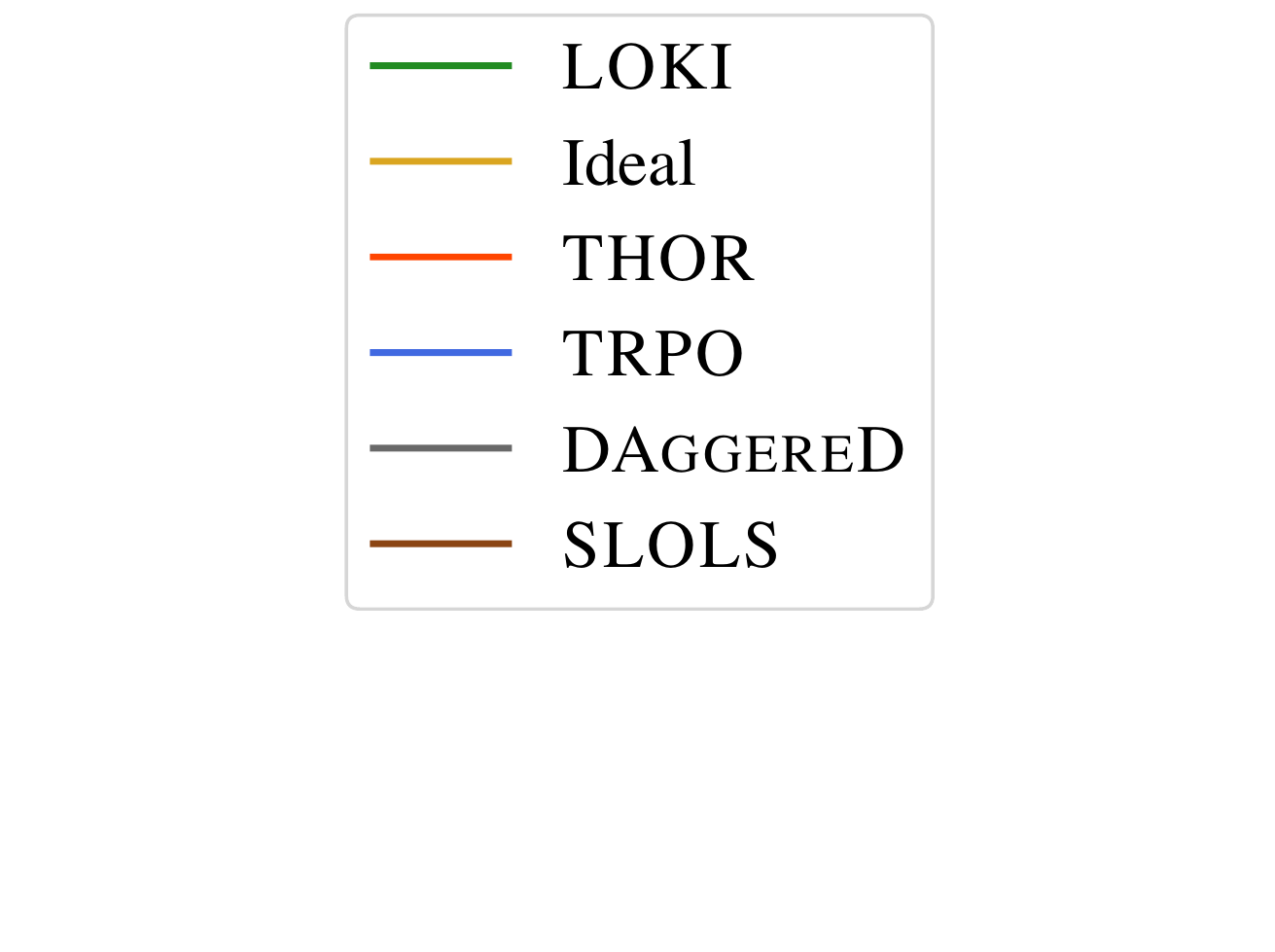}}
	\caption{Learning curves. Shaded regions correspond to $\pm \frac{1}{2}$-standard deviation.}
	\label{fig:experiments}
\end{figure*}

\subsection{EXPERIMENTAL RESULTS} \label{sec:experimental results}
We report the performance of these algorithms on various tasks in Figure~\ref{fig:experiments}.
The performance is measured by the accumulated \emph{rewards}, which are directly provided by OpenAI Gym.

We first establish the performance of two baselines, which represent standard RL (\trpo) and standard IL (\daggered). \trpo is able to achieve considerable and almost monotonic improvement from a randomly initialized policy.  \daggered reaches the performance of the suboptimal policy in a relatively very small number of iterations, e.g. 15 iterations in 2D Walker, in which  the suboptimal policy to imitate is \trpo at iteration 100. However, it fails to outperform the suboptimal expert.

Then, we evaluate the proposed algorithm \loki and Ideal, the performance of which we wish to achieve in theory.
\loki consistently enjoys the best of both \trpo and \daggered: it improves as fast as \daggered at the beginning, keeps improving, and then finally matches the performance of Ideal after transitioning into the reinforcement phase. 
Interestingly, the on-policy value function learned, though not used, in the imitation phase helps \loki transition from  imitation phase to reinforcement phase smoothly.

Lastly, we compare \loki to the two other baselines (\slols and \thor) that combine RL and IL. 
\loki outperforms these two baselines by a considerably large margin in Hopper, 2D Walker, and 3D Walker; 
but surprisingly, the performance of \slols and \thor are inferior even to \trpo on these tasks.
The main reason is that the first-order oracles of both methods is based on an estimated expert value function $\hat{V}_{\pi^*}$. As $\hat{V}_{\pi^*}$ is only regressed on the data collected by running the expert policy, large covariate shift error could happen if the dimension of the state and action spaces are high, or if the uncontrolled system is complex or unstable.
For example, in the low-dimensional Pendulum task and the  simple Reacher task,
the expert value function can generalize better. 
As a result, in these two cases, \lols and \thor achieve super-expert performance.
However, in more complex tasks, where the effects of covariant shift amplifies exponentially with the dimension of the state space,  \thor and \slols start to suffer from the inaccuracy of $\hat{V}_{\pi^*}$, as illustrated in the 2D Walker and 3D Walker tasks.

\section{CONCLUSION}

We present a simple, elegant algorithm, \loki, that combines the best properties of RL and IL. Theoretically, we show that, by randomizing the switching time, \loki can perform as if running policy gradient steps directly from the expert policy. Empirically, \loki demonstrates superior performance compared with the expert policy and more complicated algorithms that attempt to combine RL and IL. 

\subsubsection*{Acknowledgements}
This work was supported in part by NSF NRI Award
1637758, NSF CAREER Award 1750483, and NSF Graduate Research Fellowship under Grant No. 2015207631.

{\renewcommand{\section}[2]{\subsubsection*{References}}
\bibliographystyle{apalike}
\bibliography{ref}
}

\clearpage

\onecolumn
\appendix
\section{Task Details} \label{app:task details}

\begin{table*}[h]
	\label{table:tasks}
	\begin{center}
		\begin{tabular}{lccccc}
			&   Pendulum &  Hopper & 2D Walker & 3D Walker & Reacher\\
			\hline \\
			Observation space dimension   						&	3  	&	11	&	17	&	41	&	21	\\
			Action space dimension        						& 	1	&	3	&	6	&	15	&	5	\\
			Number of samples per iteration 					&	4k 	&	16k	&	16k	&	16k	&	40k	\\
			Number of iterations     							&	100	&	200	&	200 &	1000&	500 \\
            Number of \trpo iterations for expert 				&	50	&	50	&	100	&	500 &	100	\\     
			Upper limit of number of imitation steps of \loki	&	10	&	20	&	25	&	50	&	25	\\	
			Truncated horizon of \thor          				&	40	&	40	&	250 &	250 &	250 \\
		\end{tabular}
	\end{center}
\end{table*}
The expert value estimator $\hat V_{\pi^*}$ needed by \slols and \thor were trained on a large set of samples 
(50 times the number of samples used in each batch in the later policy learning),
and the final average TD error are: Pendulum ($0.972$), Hopper ($0.989$), 2D Walker ($0.975$), 3D Walker ($0.983$), and Reacher ($0.973$), measured in terms of explained variance, which is defined as 1- (variance of error /  variance of prediction). 

\section{Proof of Section~\ref{sec:comparison}} \label{app:proof for comparison}

\subsection{Proof of Proposition~\ref{pr:nonconvex performance}} \label{app:proof of th:nonconvex performance}

To prove Proposition~\ref{pr:nonconvex performance}, we first prove a useful Lemma~\ref{lm:mirror descent improvement}. 
\begin{lemma} \label{lm:mirror descent improvement}
	Let $\KK$ be a convex set. Let $h= \E[g]$. Suppose $R$ is $\alpha$-strongly convex with respect to norm $\norm{\cdot}$.
	\begin{align*}
	y =  \argmin_{z \in \KK} \lr{g}{z} + \frac{1}{\eta}D_{R}(z||x)
	\eqqcolon P_{g,\eta}(x)
	\end{align*}
	where $\eta$ satisfies that 
	$
	-\alpha\eta + \frac{\beta \eta^2}{2} \leq 0 
	$.
	Then it holds
	\begin{align*}
	&\E[\lr{h}{y - x} +  \frac{\beta}{2} \norm{x-y}^2] \leq \frac{1}{2} \left( -\alpha\eta + \frac{\beta \eta^2}{2} \right)\E\left[ \norm{H}^2 \right] + \frac{2 \eta}{\alpha}  \E\left[ \norm{g-h}_*^2 \right] 
	\end{align*}
	where  $H = \frac{1}{\eta}(x -  P_{h, \eta}(x))$.
	In particular,  if $\norm{\cdot} = \norm{\cdot}_W$ for some positive definite matrix $W$, $R$ is quadratic, and $\KK$ is Euclidean space, 
	\begin{align*}
	&\E[\lr{h}{y - x} +  \frac{\beta}{2} \norm{x-y}^2] \leq\left( -\alpha\eta + \frac{\beta \eta^2}{2} \right)\E\left[ \norm{H}^2 \right] + \frac{\beta \eta^2}{2} \E[\norm{H-G}^2]
	\end{align*}
\end{lemma}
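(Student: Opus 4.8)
The plan is to read the lemma as a stochastic mirror-descent progress bound and to work with the two prox-gradients $G \coloneqq \frac{1}{\eta}(x - P_{g,\eta}(x)) = \frac1\eta(x-y)$ (the stochastic one) and $H = \frac1\eta(x - P_{h,\eta}(x))$ (the mean one). The quantity on the left, $\lr{h}{y-x}+\frac\beta2\norm{x-y}^2$, is exactly the upper bound on the one-step change of $J$ coming from $\beta$-smoothness, and the goal is to show it is governed by $\norm{H}^2$ up to a noise term measured by $\norm{g-h}_*$.

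First I would record two consequences of the prox optimality condition in~\eqref{eq:prox-map}. (i) The three-point inequality: for $y=P_{g,\eta}(x)$ and any $u\in\KK$, $\lr{g}{y-u}\le\frac1\eta\left(D_R(u\|x)-D_R(u\|y)-D_R(y\|x)\right)$; taking $u=x$ and using $\alpha$-strong convexity of $R$ gives $\lr{g}{y-x}\le-\frac\alpha\eta\norm{x-y}^2$, i.e.\ $\lr{g}{G}\ge\alpha\norm{G}^2$. (ii) A prox-gradient Lipschitz bound $\norm{G-H}\le\frac1\alpha\norm{g-h}_*$, obtained by summing the optimality conditions of $P_{g,\eta}(x)$ and $P_{h,\eta}(x)$ and invoking strong convexity of $R$.

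The main estimate then comes from the decomposition $\lr{h}{y-x}+\frac\beta2\norm{x-y}^2 = -\eta\lr{g}{G}+\eta\lr{g-h}{G}+\frac{\beta\eta^2}2\norm{G}^2 \le A\norm{G}^2+\eta\lr{g-h}{G}$, where $A\coloneqq-\alpha\eta+\frac{\beta\eta^2}2\le0$ by the step-size hypothesis. I would split the perturbation as $\lr{g-h}{G}=\lr{g-h}{H}+\lr{g-h}{G-H}$: since $H$ is deterministic given $x$ and $\E[g]=h$, the first piece vanishes in expectation, while the second is at most $\frac1\alpha\norm{g-h}_*^2$ by (ii). Finally, because $A\le0$ I convert $A\norm{G}^2$ into an $\norm{H}^2$ term via $\norm{G}^2\ge\frac12\norm{H}^2-\norm{G-H}^2$, which gives $A\norm{G}^2\le\frac A2\norm{H}^2+|A|\,\norm{G-H}^2$; bounding $|A|\le\alpha\eta$ together with (ii) collects both noise contributions into the stated coefficient $\frac{2\eta}\alpha$.

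The main obstacle is precisely that for general $R$ and $\KK$ the prox-map is nonlinear in $g$, so $\E[G]\neq H$ and the term $\lr{h}{G-H}$ can be neither cancelled in expectation nor bounded by $\norm{g-h}_*$ alone (there is no a priori control of $\norm{h}_*$). The resolution above avoids ever isolating $\lr{h}{G-H}$: pairing the perturbation against $g-h$ rather than against $h$ is exactly what makes the zero-mean cancellation and the Lipschitz bound applicable. For the quadratic, Euclidean special case the map $g\mapsto G$ is linear, so $\E[G]=H$ and the identity $\norm{G}^2=\norm{H}^2+2\lr{H}{G-H}+\norm{G-H}^2$ holds exactly; taking expectations kills both cross terms and reproduces the sharper bound with the $\frac{\beta\eta^2}2\norm{H-G}^2$ noise term directly, bypassing the lossy conversion used in the general case.
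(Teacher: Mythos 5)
Your proof is correct and follows essentially the same route as the paper's: the same decomposition of $\lr{h}{y-x}$ into a $\lr{g}{y-x}$ part controlled by the prox optimality condition and a perturbation paired with $g-h$ (so that the $\lr{g-h}{H}$ piece vanishes in expectation while $\lr{g-h}{G-H}$ is absorbed via the sensitivity bound), the same conversion $\norm{G}^2 \ge \frac{1}{2}\norm{H}^2 - \norm{G-H}^2$ with $|{-\alpha\eta+\frac{\beta\eta^2}{2}}| \le \alpha\eta$ yielding the coefficient $\frac{2\eta}{\alpha}$, and the same linearity argument ($\E[G]=H$, exact expansion of $\norm{G}^2$) in the quadratic unconstrained case. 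The only difference is cosmetic: you obtain $\norm{G-H}\le\frac{1}{\alpha}\norm{g-h}_*$ by summing the two optimality conditions and invoking strong convexity of $R$, whereas the paper derives it from the $\frac{\eta}{\alpha}$-smoothness of the conjugate $\left(\frac{1}{\eta}R\right)^*$ via the Legendre transform; both are standard and give the identical constant.
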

\begin{proof}
	Let  $G = \frac{1}{\eta}(x -P_{g,\eta}(x))$. First we show for the special case (i.e. suppose $R(x) = \frac{1}{2} \lr{x}{M x}$ for some positive definite matrix $M$, and therefore $G = M^{-1}g$ and $H = M^{-1} h$). 
	\begin{align*}
	\E[\lr{h}{y - x}] = - \eta \E[\lr{h}{G} ] 
	= - \eta\E[\lr{h}{H} ] \leq - \alpha \eta \norm{H}^2
	\end{align*}
	and because $g$ is unbiased, 
	\begin{align*}
 	\E\left[\frac{\beta}{2}\norm{x-y}^2\right] =  \E\left[\frac{\eta^2 \beta}{2}\norm{H}^2 + \frac{\eta^2 \beta}{2}\norm{G-H}^2 \right]
	\end{align*}
	For general setups, we first separate the term into two parts		
	\begin{align*}
	\lr{h}{y - x} = \lr{g}{y - x} +  \lr{h-g}{y - x}
	\end{align*}
	For the first term, we use the optimality condition
	\begin{align*}
	\lr{g + \frac{1}{\eta}\nabla R(y) - \frac{1}{\eta} \nabla R(x)}{z - y} \geq 0, \quad \forall z \in \KK
	\end{align*}
	which implies
	\begin{align*}
	\lr{g }{x - y} \geq \frac{1}{\eta} \lr{\nabla R(y) -  \nabla R(x)}{y - x} \geq \frac{\alpha}{\eta} \norm{x-y}^2
	\end{align*}
	Therefore, we can bound the first term by
	\begin{align*}
	\lr{g}{y - x}  &\leq - \frac{\alpha}{\eta} \norm{x-y}^2 
	=-\alpha\eta \norm{G}^2 
	\end{align*}
	
	On the other hand, for the second term, we first write
	\begin{align*}
	\lr{h-g}{y - x} &= -\eta\lr{h-g}{G} \\
	& =  -\eta\lr{h-g}{H} + \eta\lr{h-g}{H-G}
	\end{align*}
	and we show that 
	\begin{align} \label{eq:continuity of prox map} 
	\lr{h-g}{H-G} \leq \norm{h-g}_*\norm{H-G} \leq \frac{\norm{h-g}_*^2 }{\alpha}
	\end{align}
	This can be proved by Legendre transform:
	\begin{align*}
	P_{g,\eta}(x) = &\argmin_{z \in \KK} \lr{g}{z} + \frac{1}{\eta}D_{R}(z||x) \\
	&=\argmin_{z \in \KK} \lr{g - \frac{1}{\eta}\nabla R(x)}{z} +\frac{1}{\eta} R(z) \\
	&=   \nabla \left(\frac{1}{\eta} R\right)^* \left(\frac{1}{\eta} \nabla R(x) - g\right)
	\end{align*}
	Because $\frac{1}{\eta} R$ is $\frac{\alpha}{\eta}$-strongly convex with respect to norm $\norm{\cdot}$, $\left(\frac{1}{\eta} R\right)^*$ is $\frac{\eta}{\alpha}$-smooth with respect to norm $\norm{\cdot}_*$, we have
	\begin{align*}
	\norm{H-G} \leq \frac{1}{\eta}\frac{\eta }{\alpha} \norm{g - h}_* = \frac{1 }{\alpha} \norm{g - h}_*
	\end{align*}
	which proves~\eqref{eq:continuity of prox map}.
	Putting everything together, we have 
	\begin{align*}
	&\E[\lr{h}{y - x} +  \frac{\beta}{2} \norm{x-y}^2] \\
	&\leq \E\left[ \left( -\alpha\eta + \frac{\beta \eta^2}{2} \right) \norm{G}^2  \right] 
	 + \E\left[ -\eta\lr{h-g}{H}+  \frac{\eta}{\alpha}  \norm{g-h}_*^2 \right] \\
	&=  \E\left[ \left( -\alpha\eta + \frac{\beta \eta^2}{2} \right) \norm{G}^2  \right] + \E\left[   \frac{\eta}{\alpha}  \norm{g-h}_*^2 \right] 
	\end{align*}
	Because 
	\begin{align*}
	\norm{H}^2 \leq 2\norm{G}^2 + 2 \norm{H-G}^2 \leq 2\norm{G}^2 + \frac{2}{\alpha^2} \norm{h-g}_*^2
	\end{align*}
	it holds that 
	\begin{align*}
	&\E[\lr{h}{y - x} +  \frac{\beta}{2} \norm{x-y}^2] \\
	&\leq \frac{1}{2} \left( -\alpha\eta + \frac{\beta \eta^2}{2} \right)\E\left[ \norm{H}^2 \right] + \frac{ 2 \eta}{\alpha}  \E\left[ \norm{g-h}_*^2 \right] 
	\end{align*}
\end{proof}

\paragraph{Proof of Proposition~\ref{pr:nonconvex performance}}
We apply Lemma~\ref{lm:mirror descent improvement}: By smoothness of $J$, 
\begin{align*}
\E\left[ J(\pi_{n+1}) \right] - J(\pi_n) &\leq  \E\left[ \lr{\nabla J(\pi_n)}{\theta_{n+1} - \theta_n}
+ \frac{\beta}{2} \norm{\theta_{n+1} - \theta_n}^2 \right] \\
&\leq  \frac{1}{2}\left(  - \alpha_n \eta_n +  \frac{ \beta \eta_n^2}{2} \right) \E\left[ \norm{\hat{\nabla}_\theta J(\pi_n) }^2 \right]+ \frac{2 \eta_n  }{\alpha_n}    \norm{\nabla_\theta J(\pi_n) - g_n}_*^2
\end{align*}
This proves the statement in Proposition~\ref{pr:nonconvex performance}. We note that, in the above step, the general result of Lemma~\ref{lm:mirror descent improvement}. For the special case~Lemma~\ref{lm:mirror descent improvement}, we would recover the usual convergence property of stochastic smooth nonconvex optimization, which shows on average convergence to stationary points in expectation.

\subsection{Proof of Proposition~\ref{pr:online convex performance}}

We use a well-know result of mirror descent, whose proof can be found e.g. in~\citep{juditsky2011first}.
\begin{lemma} \label{lm:mirror descent}
Let $\KK$ be a convex set. Suppose $R$ is $\alpha$-strongly convex with respect to norm $\norm{\cdot}$. Let $g$ be a vector in some Euclidean space and let
\begin{align*}
y =  \argmin_{z \in \KK} \lr{g}{z} + \frac{1}{\eta}D_{R}(z||x)
= P_{g,\eta}(x)
\end{align*}
Then for all $z \in \KK$
\begin{align*}
\eta \lr{g}{x - z} \leq D_R(z||x) - D_R(z||y) + \frac{\eta^2}{2} \norm{g}_*^2
\end{align*}
\end{lemma}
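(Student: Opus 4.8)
The plan is to establish the single-step inequality from the first-order optimality condition for the prox-map $y = P_{g,\eta}(x)$ together with the three-point (generalized Pythagorean) identity for Bregman divergences. First I would record the optimality condition. Since $y$ minimizes the convex function $\lr{g}{z} + \frac{1}{\eta}D_R(z||x)$ over the convex set $\KK$, and $\nabla_z D_R(z||x) = \nabla R(z) - \nabla R(x)$, the variational inequality
\begin{align*}
\lr{\,g + \tfrac{1}{\eta}\bigl(\nabla R(y) - \nabla R(x)\bigr)\,}{z - y} \geq 0
\end{align*}
holds for all $z \in \KK$, which rearranges to $\lr{g}{z - y} \geq -\tfrac{1}{\eta}\lr{\nabla R(y) - \nabla R(x)}{z - y}$.

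Next I would eliminate the gradient pairing in favor of Bregman divergences. The three-point identity $D_R(z||x) = D_R(z||y) + D_R(y||x) + \lr{\nabla R(y) - \nabla R(x)}{z - y}$ follows immediately by expanding each divergence through its definition $D_R(a||b) = R(a) - R(b) - \lr{\nabla R(b)}{a-b}$ and cancelling the $R$-terms. Substituting it into the optimality condition yields $\eta\lr{g}{y - z} \leq D_R(z||x) - D_R(z||y) - D_R(y||x)$. Since the target bounds $\eta\lr{g}{x - z}$ and not $\eta\lr{g}{y - z}$, I split $\lr{g}{x - z} = \lr{g}{y - z} + \lr{g}{x - y}$ and control the stray term by Cauchy--Schwarz in the primal/dual norm pair followed by Young's inequality, $\eta\lr{g}{x - y} \leq \frac{\eta^2}{2\alpha}\norm{g}_*^2 + \frac{\alpha}{2}\norm{x - y}^2$. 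The leftover quadratic is then absorbed into the negative $-D_R(y||x)$ term via $\alpha$-strong convexity, $D_R(y||x) \geq \frac{\alpha}{2}\norm{x-y}^2$, leaving the residual $\frac{\eta^2}{2\alpha}\norm{g}_*^2$, which matches the stated $\frac{\eta^2}{2}\norm{g}_*^2$ under the normalization $\alpha = 1$.

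The only nonroutine point --- and the one I would be most careful about --- is recognizing that the $-D_R(y||x)$ term produced by the three-point identity is not slack to be discarded but is exactly what pays for passing from $\lr{g}{y-z}$ to the desired $\lr{g}{x-z}$: the strong-convexity lower bound on $D_R(y||x)$ must precisely dominate the quadratic generated by Young's inequality, and choosing the Young constant equal to $\alpha$ is what pins down the final coefficient. Everything else is algebraic bookkeeping with the definition of the Bregman divergence, and the convexity of $\KK$ is used only to license the variational inequality for every $z \in \KK$, which is exactly the range over which the conclusion is claimed.
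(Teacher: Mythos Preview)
Your argument is correct and is precisely the standard proof via the optimality condition, the three-point Bregman identity, and Young's inequality; the paper itself does not supply a proof of this lemma but merely cites it as a well-known result (\citet{juditsky2011first}). Your observation about the constant is also on point: the argument produces $\tfrac{\eta^2}{2\alpha}\norm{g}_*^2$, and the paper's stated $\tfrac{\eta^2}{2}\norm{g}_*^2$ is implicitly under the convention $\alpha=1$.
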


Next we prove a lemma of performing online mirror descent with weighted cost. While weighting it not required in proving Proposition~\ref{pr:online convex performance}, it will be useful to prove Theorem~\ref{th:switch} later in Appendix~\ref{app:proof of method}.
\begin{lemma} \label{lm:weighted online mirror descent}
	Let $f_n$ be $\sigma$-strongly convex with respect to some strictly convex function $R_n$, i.e.
	\begin{align*}
	f_n(x) \geq f_n(y) + \lr{\nabla f_n(y)}{x-y} + \sigma D_{R_n}(x||y)
	\end{align*} 
	and let $\{w_n\}_{n=1}^N$ be a sequence of positive numbers. 
	Consider the update rule
	\begin{align*}
	x_{n+1} =  \argmin_{z \in \KK} \lr{w_n g_n}{x} + \frac{1}{\eta_n}D_{R_n}(z||x_n)
	\end{align*}
	 where $g_n = \nabla f_n (x_n)$ and  
	$	\eta_n = \frac{1}{ \hat{\sigma} \sum_{m=1}^{n} w_m}$. 
	Suppose $\hat{\sigma} \leq \sigma $. Then for all $x^* \in \KK$, $N \geq M \geq 1$, it holds that 
	\begin{align*}
	\sum_{n=M}^{N} w_n f_n(x_n) - w_n f_n(x^*) 
	&\leq  \hat{\sigma} D_{R_M}(x^*|| x_M) \sum_{n=1}^{M-1}w_n +  \frac{1}{2\hat{\sigma}} \sum_{n=1}^{N}\frac{ w_n^2 \norm{g_n}_*^2}{ \sum_{m=1}^{n} w_m } 
	\end{align*}
\end{lemma}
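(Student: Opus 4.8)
The plan is to prove Lemma~\ref{lm:weighted online mirror descent} by combining the single-step mirror descent inequality (Lemma~\ref{lm:mirror descent}) with the strong-convexity assumption on each $f_n$, then summing over $n$ from $M$ to $N$ and exploiting telescoping in the Bregman-divergence terms. First I would apply Lemma~\ref{lm:mirror descent} at step $n$ with $g = w_n g_n$, $x = x_n$, $y = x_{n+1}$, and $z = x^*$, which gives $\eta_n \lr{w_n g_n}{x_n - x^*} \leq D_{R_n}(x^*||x_n) - D_{R_n}(x^*||x_{n+1}) + \frac{\eta_n^2}{2}\norm{w_n g_n}_*^2$. Dividing by $\eta_n$ and using $g_n = \nabla f_n(x_n)$ together with $\sigma$-strong convexity in the form $\lr{\nabla f_n(x_n)}{x_n - x^*} \geq f_n(x_n) - f_n(x^*) + \sigma D_{R_n}(x^*||x_n)$, I obtain a per-step bound on $w_n(f_n(x_n) - f_n(x^*))$ in terms of $\frac{1}{\eta_n}D_{R_n}(x^*||x_n)(1 - \sigma\eta_n)$-type coefficients minus $\frac{1}{\eta_n}D_{R_n}(x^*||x_{n+1})$, plus the noise term $\frac{\eta_n}{2}w_n^2\norm{g_n}_*^2$.

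Next I would substitute the prescribed step size $\eta_n = \frac{1}{\hat\sigma\sum_{m=1}^n w_m}$, so that $\frac{1}{\eta_n} = \hat\sigma\sum_{m=1}^n w_m$, and use $\hat\sigma \leq \sigma$ to control the coefficient of $D_{R_n}(x^*||x_n)$. The key algebraic step is to show that the coefficient multiplying $D_{R_n}(x^*||x_n)$, namely $\frac{1}{\eta_n} - \sigma w_n = \hat\sigma\sum_{m=1}^n w_m - \sigma w_n$, is bounded above by $\frac{1}{\eta_{n-1}} = \hat\sigma\sum_{m=1}^{n-1} w_m$ precisely because $\hat\sigma w_n \leq \sigma w_n$. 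This alignment is what makes the divergence terms telescope across consecutive steps, leaving only the initial term $\frac{1}{\eta_{M-1}}D_{R_M}(x^*||x_M) = \hat\sigma D_{R_M}(x^*||x_M)\sum_{m=1}^{M-1}w_m$ and a final negative leftover that can be dropped. Summing the noise terms gives $\frac{1}{2}\sum_{n=M}^N \eta_n w_n^2 \norm{g_n}_*^2 = \frac{1}{2\hat\sigma}\sum_{n=M}^N \frac{w_n^2\norm{g_n}_*^2}{\sum_{m=1}^n w_m}$, which I would upper-bound by extending the summation index down to $n=1$ to match the stated right-hand side.

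The main obstacle I anticipate is handling the telescoping correctly when the regularizers $R_n$ vary with $n$, since the Bregman divergence $D_{R_n}(x^*||x_{n+1})$ produced at step $n$ must be matched against $D_{R_{n+1}}(x^*||x_{n+1})$ appearing at step $n+1$, and these use different regularizers. I would resolve this by carefully tracking the coefficients so that the telescoping is in the scalar weights $\frac{1}{\eta_n}$ rather than requiring the $R_n$ to coincide; the cleanest route is to pair the $-\frac{1}{\eta_n}D_{R_n}(x^*||x_{n+1})$ term at step $n$ with the $+(\frac{1}{\eta_{n+1}}-\sigma w_{n+1})D_{R_{n+1}}(x^*||x_{n+1})$ term at step $n+1$, verifying the coefficient inequality $\frac{1}{\eta_{n+1}} - \sigma w_{n+1} \leq \frac{1}{\eta_n}$ and appealing to non-negativity of the Bregman divergences. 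A secondary subtlety is the boundary treatment at the index $M$: the residual $D_{R_M}(x^*||x_M)$ term is what produces the suffix-dependent factor $\sum_{n=1}^{M-1}w_n$, and I would make sure the convention $\eta_{M-1}$ (equivalently $\frac{1}{\eta_{M-1}} = \hat\sigma\sum_{m=1}^{M-1}w_m$) yields exactly the stated constant.
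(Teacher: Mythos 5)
Your proposal is correct and takes essentially the same route as the paper's own proof: per-step application of Lemma~\ref{lm:mirror descent} with $g = w_n g_n$ combined with the $\sigma$-strong convexity inequality, substitution of $\frac{1}{\eta_n} = \hat{\sigma}\sum_{m=1}^{n} w_m$ so that the divergence coefficients satisfy $\frac{1}{\eta_n} - \frac{1}{\eta_{n-1}} - \sigma w_n \leq (\hat{\sigma}-\sigma)w_n \leq 0$, the boundary convention $\frac{1}{\eta_{M-1}} = \hat{\sigma}\sum_{m=1}^{M-1} w_m$ (with $\frac{1}{\eta_0}=0$) yielding the $\hat{\sigma} D_{R_M}(x^*||x_M)\sum_{n=1}^{M-1}w_n$ term, and extension of the nonnegative noise sum from $n=M$ down to $n=1$. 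Even the subtlety you flag about matching $D_{R_n}(x^*||x_{n+1})$ against $D_{R_{n+1}}(x^*||x_{n+1})$ is treated no more carefully in the paper, whose telescoping step silently interchanges these two divergences (strictly, both your argument and the paper's need $D_{R_{n+1}}(x^*||x_{n+1}) \leq D_{R_n}(x^*||x_{n+1})$ or a fixed regularizer, since nonnegativity of Bregman divergences alone does not close that pairing), so your sketch matches the published proof in both structure and level of rigor.
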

\begin{proof} \allowdisplaybreaks
	The proof is straight forward by strong convexity of $f_n$ and Lemma~\ref{lm:mirror descent}. 	
	\begin{align*}
	&\sum_{n=M}^{N} w_n (f_n(x_n) -f_n(x^*)) \\
	&\leq \sum_{n=M}^{N} w_n \left( \lr{ g_n}{x_n - x^*} -  \sigma D_{R_n}(x^*||x_n) \right)  \qquad\text{($\sigma$-strong convexity)}  \\
	&\leq  \sum_{n=M}^{N} \frac{1}{\eta_n} D_{R_n}(x^*||x_n) - \frac{1}{\eta_n}  D_{R_n}(x^*||x_{n+1}) - w_n \sigma D_{R_n}(x^*||x_n)  + \frac{w_n^2 \eta_n}{2} \norm{g_n}_*^2  \qquad\text{(Lemma~\ref{lm:mirror descent})} \\
	&\leq  \frac{D_{R_M}(x^*|| x_M)}{\eta_{M-1}} + \sum_{n=M}^{N} \left( \frac{1}{\eta_n} -  \frac{1}{\eta_{n-1}}  - w_n \sigma \right) D_{R_n}(x^*||x_n) + \frac{ w_n^2 \eta_n}{2} \norm{g_n}_*^2  \qquad\text{(We define $\frac{1}{\eta_0}  =0 $)}\\
	&=  \hat{\sigma} D_{R_M}(x^*|| x_M) \sum_{n=1}^{M-1}w_n  + \sum_{n=1}^{N} \left( w_n \hat{\sigma} - w_n \sigma \right) D_{R_n}(x^*||x_n) +  \frac{1}{2\hat{\sigma}} \sum_{n=1}^{N}\frac{ w_n^2 \norm{g_n}_*^2}{ \sum_{m=1}^{n} w_m } \\
	&\leq   \hat{\sigma} D_{R_M}(x^*|| x_M) \sum_{n=1}^{M-1}w_n +  \frac{1}{2\hat{\sigma}} \sum_{n=1}^{N}\frac{ w_n^2 \norm{g_n}_*^2}{ \sum_{m=1}^{n} w_m }  \qedhere
	\end{align*}
\end{proof}

\paragraph{Proof of Proposition~\ref{pr:online convex performance}}
Now we use Lemma~\ref{lm:weighted online mirror descent} to prove the final result. 
It's easy to see that if $g_n$ is an unbiased stochastic estimate of $ \nabla f_n (x_n)$ in Lemma~\ref{lm:weighted online mirror descent}, then the performance bound would hold in expectation since $x_n$ does not depend on $g_n$.
Finally, by definition of $\epsilon_{\text{class}}$, this concludes the proof.

\section{Proof of Section~\ref{sec:method}} \label{app:proof of method}

\subsection{Proof of Theorem~\ref{th:weighted random stop}}
Let $w_n = n^d $. The proof is similar to the proof of Proposition~\ref{pr:online convex performance} but with weighted cost. 
First we use  Lemma~\ref{lm:performance difference} and bound the series of weighted accumulated loss
\begin{align*}
\E\left[ \sum_{n=N_m}^{N_{M}}w_n J(\pi_n) \right] - \left( \sum_{n=N_m}^{N_{M}} w_n \right) J(\pi^*)  \leq \frac{C_{\pi^*}}{1-\gamma  } \sum_{n=N_m}^{N_M} w_n l_n(\pi_n)
\end{align*}
Then we bound the right-hand side by using Lemma~\ref{lm:weighted online mirror descent},
\begin{align*}
\sum_{n=N_m}^{N_M} w_n l_n(\pi_n) -\min_{\pi \in \Pi} \sum_{n=N_m}^{N_{M}} w_n l_n(\pi) 
&\leq   \hat{\sigma}  D_{\RR} \sum_{n=1}^{N_m-1}w_n   + \frac{1}{2\hat{\sigma}} \sum_{n=N_m}^{N_{M}}\frac{ w_n^2 \norm{g_n}_*^2}{ \sum_{m=1}^{n} w_m }   \\
&\leq   \frac{\hat{\sigma} D_{\RR}  N_m^{d+1}}{d+1}
+  \frac{d+1}{2\hat{\sigma}} \sum_{n=N_m}^{N_{M}} \norm{g_n}_*^2   n^{d-1}
\end{align*}
where we use the fact that   $d \geq 0$,
\begin{align*}
\frac{n^{d+1} - (m-1)^{d+1} }{d+1}  \leq \sum_{k=m}^{n} k^d \leq \frac{(n+1)^{d+1} -m^{d+1}}{d+1}
\end{align*}
which implies $
  \frac{w_n^2}{\sum_{m=1}^{n} w_m} \leq \frac{(d+1) n^{2d} }{n^{d+1}} \leq (d+1) n^{d-1}.
$
Combining these two steps, we see that  the weighted accumulated loss on average can be bounded by
\begin{align*}
\E\left[ \frac{\sum_{n=N_m}^{N_M} w_n J(\pi_n)}{\sum_{n=N_m}^{N_M} w_n} \right] 
&\leq   J(\pi^*) +  \frac{C_{\pi^*}}{1-\gamma } \left(  \epsilon^w_{\text{class}}+
		\frac{\hat{\sigma} D_{\RR}  N_m^{d+1}}{(d+1) \sum_{n=N_m}^{N_M} w_n }
		+  \frac{d+1}{2\hat{\sigma} \sum_{n=N_m}^{N_M} w_n } \sum_{n=N_m}^{N_{M}} \norm{g_n}_*^2   n^{d-1}  \right) \\ 		
\end{align*}
Because $N_M \geq 2 N_m$ and $\frac{x}{1-x}\leq 2 x$ for $x \leq \frac{1}{2}$, we have
\begin{align*}
	\frac{  N_m^{d+1}}{(d+1) \sum_{n=N_m}^{N_M} w_n } \leq  \frac{  N_m^{d+1}}{  N_M^{d+1}  - (N_{m}-1)^{d+1}}  
	\leq \frac{  N_m^{d+1}}{  N_M^{d+1}  - N_{m}^{d+1}}  = \frac{1}{  \left(\frac{N_M}{N_m}\right)^{d+1} -1}
	\leq  2	 \left( \frac{N_m}{N_M} \right)^{d+1} \leq  2^{-d}
\end{align*}
and, for $d\geq 1$,
\begin{align*}
 \frac{d+1}{\sum_{n=N_m}^{N_M} w_n } \sum_{n=N_m}^{N_{M}}  n^{d-1} &\leq  \frac{d+1}{N_M^{d+1}  - (N_{m}-1)^{d+1} } \frac{d+1}{d} \left( (N_M+1)^d - N_m^d  \right)\\
 &\leq \frac{(d+1)^2}{d} \frac{(N_M+1)^d}{N_M^{d+1}  - N_{m}^{d+1} } \\
  &\leq \frac{(d+1)^2}{d} \frac{\frac{1}{N_M}(1+\frac{1}{N_M})^d}{1  - \left(\frac{N_m}{N_M}\right)^{d+1} }  \\
    &\leq   \frac{16d}{3N_M}\left( 1+\frac{1}{N_M} \right)^d  \qquad\text{($N_M\geq 2 N_m$ and $d\geq1$) } \\
    &\leq  \frac{16d}{3 N_M} \exp\left( \frac{d}{N_M} \right)
\end{align*}
and for $d=0$, 
\begin{align*}
\frac{d+1}{\sum_{n=N_m}^{N_M} w_n } \sum_{n=N_m}^{N_{M}}  n^{d-1}
= \frac{1}{\sum_{n=N_m}^{N_M} 1 } \sum_{n=N_m}^{N_{M}} \frac{1}{n}
\leq \frac{ \log(N_M)+1}{N_M - N_m} \leq \frac{ 2 \left( \log(N_M)+1\right) }{N_M}
\end{align*}
Thus, by the assumption that $\norm{g_n}_* \leq G $ almost surely, the weighted accumulated loss on average has an upper bound
\begin{align*}
\E\left[ \frac{\sum_{n=N_m}^{N_M} w_n J(\pi_n)}{\sum_{n=N_m}^{N_M} w_n} \right] 
&\leq   J(\pi^*) +  \frac{C_{\pi^*}}{1-\gamma } \left(  \epsilon^w_{\text{class}}+
2^{-d}	\hat{\sigma} D_{\RR} 
+  \frac{G^2 C_{N_M}/\hat{\sigma}}{N_M }   \right)  		
\end{align*}
By sampling $K$ according to $w_s$, this bound directly translates into the the bound on $J(\pi_{K})$.

\subsection{Proof of Theorem \ref{th:mix policy}}

For simplicity, we prove the result of deterministic problems. For stochastic problems, the result can be extended to expected performance, similar to the proof of Proposition~\ref{pr:online convex performance}. 
We first define the online learning problem of applying $g_n =  \nabla_{\theta} l_n^{\lambda}(\pi) |_{\pi = \pi_n}$ to update the policy. In the $n$th iteration, we define the per-round cost as 
\begin{align}
l_n^{\lambda}(\pi) = \E_{d_{\pi_n}} \E_{ \pi}[ (1-\lambda) A_{\pi_n} + \lambda A_{\pi^*}  ]
\end{align}
With the strongly convexity assumption and large enough step size, similar to the proof for Proposition~\ref{pr:online convex performance}, we can show that 
\begin{align*}
\sum_{n=1}^{N} l_n^{\lambda}(\pi_n) &\leq \min_{\pi \in \Pi}  \sum_{n=1}^{N} (l_n^{\lambda}(\pi) + \epsilon^\lambda_{\text{regret}}) \\
& =   \min_{\pi \in \Pi}   \sum_{n=1}^{N} \E_{d_{\pi_n}}  \E_{ \pi} [ (1-\lambda) A_{\pi_n} + \lambda A_{\pi^*}  ]  + N\epsilon^\lambda_{\text{regret}} 
\end{align*}
where $\epsilon^\lambda_{\text{regret}} = \tilde{O}\left(\frac{1}{T}\right)$. Note by definition of $A_{\pi_n}$, the left-hand-side in the above bound can be written as  
\begin{align}
\frac{1}{N}\sum_{n=1}^{N} l_n^{\lambda}(\pi_n) = 
\sum_{n=1}^{N} \E_{d_{\pi_n}} \E_{\pi_n}[ (1-\lambda) A_{\pi_n} + \lambda A_{\pi^*}  ] = \sum_{n=1}^{N} \lambda \E_{d_{\pi_n}} \E_{\pi_n}[  A_{\pi^*}  ]
\end{align}

To relate this to the performance bound, we invoke Lemma~\ref{lm:performance difference} and write 
\begin{align*}
& \sum_{n=1}^{N}  J(\pi_n)  - \left(  (1-\lambda)J^*_n  + \lambda J(\pi^*) \right) \\
&=   \sum_{n=1}^{N} (1-\lambda) \left( J(\pi_n) - J^*_n\right)   + \frac{1 }{1-\gamma} \sum_{n=1}^{N}\lambda  \E_{d_{\pi_n}} \E_{\pi_n}[  A_{\pi^*}  ] \\
&\leq \sum_{n=1}^{N} (1-\lambda) \left( J(\pi_n) -J^*_n)\right)  + \min_{\pi \in \Pi}  \frac{1}{1-\gamma} \sum_{n=1}^{N} \E_{d_{\pi_n}}  \E_{ \pi} [ (1-\lambda) A_{\pi_n} + \lambda A_{\pi^*}  ]  +   \frac{N}{1-\gamma} \epsilon^\lambda_{\text{regret}}   \\
&=    \min_{\pi \in \Pi} \frac{1}{1-\gamma} \sum_{n=1}^{N} \E_{d_{\pi_n}}  \E_{ \pi} [ (1-\lambda) Q_{\pi_n} + \lambda A_{\pi^*}  ] + \frac{N}{1-\gamma} \epsilon^\lambda_{\text{regret}} 
  +  \sum_{n=1}^{N} \lambda\left(  - \frac{ \E_{d_{\pi_n}}  \E_{ \pi_n}  [ Q_{\pi_n}]}{1-\gamma}    +   J(\pi_n) - J^*_n \right) \\ 
&=    \min_{\pi \in \Pi}  \frac{1}{1-\gamma}  \sum_{n=1}^{N} \E_{d_{\pi_n}}  \E_{ \pi} [ (1-\lambda) (Q_{\pi_n} - {V}^*_{\pi_n}) + \lambda( Q_{\pi^*} - V_{\pi^*})  ] + \frac{N}{1-\gamma} \epsilon^\lambda_{\text{regret}}  \qquad \text{(Since $\E_{d_{\pi_n}}  \E_{ \pi_n}  [ Q_{\pi_n}]  = (1-\gamma) J(\pi_n)$)}\\
&=    \min_{\pi \in \Pi} \frac{1}{1-\gamma} \sum_{n=1}^{N} \E_{d_{\pi_n}}  \E_{ \pi} [ (1-\lambda) Q_{\pi_n} + \lambda Q_{\pi^*}  ] - \frac{1}{1-\gamma} \sum_{n=1}^{N} \E_{d_{\pi_n}}\left[ (1-\lambda) V^*_{\pi_n} + \lambda V_{\pi^*} \right] + \frac{N}{1-\gamma} \epsilon^\lambda_{\text{regret}}  \\
&\leq \frac{N}{1-\gamma} \epsilon^{\lambda}_{\text{class}} +  \frac{N}{1-\gamma} \epsilon^\lambda_{\text{regret}}
\end{align*}
This concludes the proof.

\end{document}